
\documentclass{article}

\usepackage{microtype}
\usepackage{graphicx}
\usepackage{subfigure}
\usepackage{booktabs} 

\usepackage{hyperref}



\usepackage[accepted]{icml2025}

\usepackage{amsmath}
\usepackage{amssymb}
\usepackage{mathtools}
\usepackage{amsthm}
\usepackage{bbm}
\usepackage{mathtools}
\usepackage{bm}
\usepackage{subcaption} 
\usepackage{enumitem}
\usepackage{xcolor}
\usepackage{tcolorbox}
\usepackage{mathtools}
\usepackage{amsmath}
\usepackage[font=small, skip=1pt]{caption} 

\newtcbox{\highlight}[1][]{on line, colframe=yellow, colback=yellow!20, 
boxrule=0.5pt, arc=4pt, auto outer arc, boxsep=0pt, left=2pt, right=2pt, top=1pt, bottom=1pt}

\renewenvironment{quote}
  {\list{}{\leftmargin=1em \rightmargin=1em}\item\relax}
  {\endlist}

\usepackage[capitalize,noabbrev]{cleveref}

\newcommand{\cG}{\mathcal{G}}

\newcommand{\cN}{\mathcal{N}}

\newcommand{\bN}{\mathbb{N}}
\newcommand{\bR}{\mathbb{R}}

\newcommand{\bP}{\mathbb{P}}

\newcommand{\1}{\mathbf{1}}

\newcommand{\norm}[1]{ \left\| #1 \right\| }
\newcommand{\softmax}{\mathrm{softmax}}

\DeclareMathOperator{\decay}{decay}
\DeclareMathOperator{\RoPE}{RoPE}

\theoremstyle{plain}
\newtheorem{theorem}{Theorem}[section]

\newtheorem{lemma}[theorem]{Lemma}

\theoremstyle{definition}
\newtheorem{definition}[theorem]{Definition}

\usepackage[textsize=tiny]{todonotes}

\icmltitlerunning{On the Emergence of Position Bias in Transformers}

\begin{document}

\twocolumn[
\icmltitle{On the Emergence of Position Bias in Transformers}




\begin{icmlauthorlist}
\icmlauthor{Xinyi Wu}{idss}
\icmlauthor{Yifei Wang}{csail}
\icmlauthor{Stefanie Jegelka}{tum,csail}
\icmlauthor{Ali Jadbabaie}{idss}

\end{icmlauthorlist}

\icmlaffiliation{idss}{MIT IDSS \& LIDS}
\icmlaffiliation{csail}{MIT CSAIL}
\icmlaffiliation{tum}{TU Munich}

\icmlcorrespondingauthor{Xinyi Wu}{xinyiwu@mit.edu}

\icmlkeywords{Machine Learning, ICML}

\vskip 0.3in
]



\printAffiliationsAndNotice{} 

\begin{abstract}
Recent studies have revealed various manifestations of position bias in transformer architectures, from the ``lost-in-the-middle" phenomenon to attention sinks, yet a comprehensive theoretical understanding of how attention masks and positional encodings shape these biases remains elusive. This paper presents a graph-theoretic framework for analyzing position bias in multi-layer attention.  Modeling attention masks as directed graphs, we quantify how tokens interact with contextual information based on their sequential positions. We uncover two key insights: First, causal masking inherently biases attention toward earlier positions, as tokens in deeper layers attend to increasingly more contextualized representations of earlier tokens. Second, we characterize the competing effects of the causal mask and relative positional encodings, such as the decay mask and rotary positional encoding (RoPE): while both mechanisms introduce distance-based decay within individual attention maps, their aggregate effect across multiple attention layers---coupled with the causal mask---leads to a trade-off between the long-term decay effects and the cumulative importance of early sequence positions. 
Through controlled numerical experiments, we not only validate our theoretical findings but also reproduce position biases observed in real-world LLMs. Our framework offers a principled foundation for understanding positional biases in transformers, shedding light on the complex interplay of attention mechanism components and guiding more informed architectural design.

\end{abstract}

\section{Introduction}

The attention mechanism is central to transformer architectures~\cite{Vaswani2017AttentionIA}, which form the backbone of state-of-the-art foundation models, including large language models (LLMs). Its success lies in its ability to dynamically weigh input elements based on their relevance, enabling efficient handling of complex dependencies~\cite{Kim2017StructuredAN, Bahdanau2014NeuralMT}. However, despite this widespread success, many questions remain unanswered regarding how these mechanisms process information and the artifacts they may introduce. Developing a deeper theoretical understanding of their inner workings is essential -- not only to better interpret existing models but also to guide the design of more robust and powerful architectures.

One particularly intriguing aspect that demands such a theoretical investigation is \emph{position bias}, i.e., the bias of the model to focus on certain regions of the input, which significantly impacts the performance and reliability of transformers and LLMs~\cite{Zheng2023JudgingLW, Wang2024EliminatingPB,Hou2023LargeLM}. For instance, these models often suffer from the ``lost-in-the-middle" problem, where retrieval accuracy significantly degrades for information positioned in the middle of the input sequence compared to information at the beginning or end~\cite{liu2024lost, Zhang2024FoundIT,Guo2024SerialPE}. Similarly, in-context learning is highly sensitive to the order of illustrative examples: simply shuffling independently and identically distributed (i.i.d.) examples can lead to significant performance degradation~\cite{Min2022RethinkingTR, Lu2021FantasticallyOP, zhao2021calibrateuseimprovingfewshot}. Moreover, recent research has also revealed that attention sinks~\cite{Xiao2023EfficientSL, Gu2024WhenAS, guo2024activedormantattentionheadsmechanistically} -- positions that attract disproportionately high attention weights -- arise at certain positions regardless of semantic relevance, suggesting an inherent positional bias.

These empirical findings suggest that while transformers effectively encode and process positional information through the combined use of attention masks and positional encodings (PEs)~\cite{Wang2024EliminatingPB,Fan2025InvICL}, these design elements also appear to introduce systematic positional biases, often independent of semantic content. This raises a fundamental and intriguing question about the role of positional information in attention mechanisms:
\begin{quote}\vspace{-1ex}
\textit{How do attention masks and positional encodings shape position bias in transformers?}
\vspace{-1ex}
\end{quote}

\begin{table*}[t] 
    \centering
       \caption{Summary of our results and their connections to empirical observations on position bias reported in the literature. }    
     \resizebox{\textwidth}{!}{
    \begin{tabular}{cc}
    \toprule
        Empirical Observations on Position Bias & Our Results \\
         \midrule
         Positional information induced by the causal mask~{\footnotesize\cite{Kazemnejad2023TheIO, Wang2024EliminatingPB, Barbero2024TransformersNG}} & Theorem~\ref{thm: causal_mask}, Section~\ref{exp:causal}\\
         Decay effects induced by relative PEs~\cite{su2023roformerenhancedtransformerrotary} & Lemma~\ref{lem:decay_mask_attn}-\ref{lem: attn_rope}, Section~\ref{sec:depth}\\
         Interplay between the causal mask and relative PEs~\cite{Wang2024EliminatingPB} & Theorem~\ref{thm:aggregate_decay_effect}-\ref{thm: rope}, Section~\ref{sec:depth}\\
         Attention sinks~\cite{Xiao2023EfficientSL,Gu2024WhenAS} & Theorem~\ref{thm: causal_mask}-\ref{thm:prefix_mask}, Appendix~\ref{app:attn_sinks}\\
         The ``lost-in-the-middle" phenomenon~\cite{liu2024lost, Guo2024SerialPE} & Section~\ref{exp:causal}\\
    \bottomrule
    \end{tabular}
    }
    \vspace{1ex}
    \label{tab:summary}
\end{table*}

To address the question, we develop a graph-theoretic framework for analyzing attention score distributions in multi-layer attention settings. Related to the empirical tool of attention rollout~\cite{Abnar2020QuantifyingAF} and building upon the theoretical analysis of attention from a graph perspective~\cite{Wu2024OnTR, Wu2023Demystify, Barbero2024TransformersNG}, we model attention masks as directed graphs, enabling rigorous mathematical analysis of attention patterns. This approach proves particularly powerful for studying multi-layer attention mechanisms, as it allows us to precisely quantify how each token's contextual representation is composed from information at different positions in the sequence. By tracking the information flow through the attention layers, we can systematically examine how positional biases emerge and propagate across layers, providing insights into the complex interplay between attention masks, PEs, and the network's depth.

\textbf{Our contributions are summarized as follows:}
\begin{itemize}
    \item We develop a graph-theoretic framework that unifies and advances understanding of position bias in transformers, offering deeper insights into diverse empirical observations documented in the literature (\cref{tab:summary}). 
    \vspace{-1ex}
    \item We show that causal masking in transformers inherently biases attention toward earlier positions in deep networks. This happens as tokens in deeper layers attend to increasingly more contextualized representations of earlier tokens, thereby amplifying the influence of initial positions. We derive analogous results for the sliding-window mask and the prefix mask, highlighting the generalizability of our framework.
    \vspace{-1ex}
    \item We uncover a nuanced interaction between causal masking and relative PEs, such as decay masks and rotary positional encoding (RoPE). Our findings highlight a trade-off in multi-layer attention networks, where local decay effects within individual layers are counterbalanced by the cumulative importance of early sequence positions. These results provide a deeper understanding of how PE and masking interact in deep attention-based architectures, with design implications about how to balance local and global context.
    \vspace{-1ex}
    \item We support our theoretical findings with experiments, empirically validating that deeper attention layers amplify the bias toward earlier parts of the sequence, while relative PEs partially mitigate this effect. Through carefully controlled numerical experiments, we further investigate the role of data in shaping position bias and how causal masking implicitly leverages positional information.
\end{itemize}

\section{Related Work}

\paragraph{Position bias in transformers}
Position bias in transformer models has emerged as a critical challenge across diverse applications. In information retrieval and ranking, \citet{liu2024lost, Guo2024SerialPE, Hou2023LargeLM, Zheng2023JudgingLW} demonstrated systematic degradation of performance due to positional dependencies. Similarly, in in-context learning, model performance can vary dramatically based solely on the order of examples \cite{Lu2021FantasticallyOP, Min2022RethinkingTR, zhao2021calibrateuseimprovingfewshot, Fan2025InvICL}. While mitigation strategies such as novel PEs  \cite{Kazemnejad2023TheIO, Zhang2024FoundIT}, alternative masking techniques~\cite{Wang2024EliminatingPB,Fan2025InvICL} and bootstrapping~\cite{Hou2023LargeLM} have been proposed, they remain task-specific and empirically driven. This gap between empirical observations and theoretical understanding highlights the need for a rigorous analysis of how transformers process and integrate positional information through attention.

\vspace{-2ex}
\paragraph{The effect of attention masks and PEs in transformers} 
The role of attention masks and PEs in transformers has been explored from various perspectives. \citet{Yun2020OnCA} analyzed the function approximation power of transformers under different masking schemes, while \citet{Wu2024OnTR, Wu2023Demystify} investigated the role of attention masks in mitigating rank collapse. Moreover, \citet{Gu2024WhenAS} empirically examined how attention masks affect the emergence of attention sinks. As for PEs, \citet{Kazemnejad2023TheIO} studied their role in length generalization, and \citet{Barbero2024RoundAR} analyzed RoPE’s use of feature dimensions. Additionally, \citet{Wang2024EliminatingPB} empirically observed that both causal masking and RoPE introduce position dependencies in LLMs. Despite these advances, fundamental questions remain about the mechanisms through which attention masks and PEs enable transformers to process and integrate positional information, as well as the nature of the systematic positional biases that emerge as a result.

\section{Problem Setup}

\paragraph{Notation}

 We use the shorthand $[n]:=\{1,\ldots,n\}$. For a matrix \( M \), we denote its \( i \)-th row by \( M_{i,:} \) and its \( j \)-th column by \( M_{:,j} \). Throughout the analysis in the paper, we formalize the attention mask to be a directed graph $\cG$. Formally, we represent a directed graph with $N$ nodes by $\mathcal{G}$ and let $E(\cG)$ be the set of directed edges of $\cG$. A directed edge $(j,i)\in E(\cG)$  from node $j$ to $i$ in $\mathcal{G}$ means that in the attention mechanism, token $j$ serves as a direct context for token $i$ or token $i$ attends to token $j$. The set $\cN_i$ of all neighbors of node $i$ is then $\{k: (k,i)\in E(\cG)\}$.  We say a node $v$ is \emph{reachable} from node $u$ in a directed graph $\cG$ if there is a directed path $(u, n_1), (n_1, n_2), ..., (n_k,v )$ from $u$ to $v$. In the attention mechanism, this means that token $u$ serves as a direct or indirect context for token $v$.

Furthermore, we will be using the following graph-theoretic terminology (see~\cref{fig:masks} for a schematic illustration):
\begin{definition} [Center Node]
    A node $v$ from which every node in the directed graph $\cG$ is reachable is called a \emph{center node}.
    \label{def:center_nodes}
\end{definition}

\subsection{(Masked) Attention Mechanism}
Given the representation $X\in\bR^{N\times d}$ of $N$ tokens, the raw attention score matrix is computed as $\small Z = XW_Q (XW_K)^\top/\sqrt{d_{QK}}$,
where $W_Q, W_K\in\bR^{d\times d'}$ are the query and the key matrix, respectively, and $\sqrt{d_{QK}}$ is a temperature term to control the scale of raw attention scores. Without loss of generality, we assume $d_{QK}=1$ in our analysis. To enforce a masked attention, we create a sparse attention matrix $A \in \bR^{N \times N}$ based on $Z$ whose sparsity pattern is specified by a directed graph $\cG$: we normalize $Z_{ij}$ among all allowed token attention interactions $(k,i) \in E(\cG)$ such that if $(j,i) \in E(\cG)$,
\vspace{-1ex}
\[A_{ij} = {\softmax}_\cG (Z_{ij}) = \frac{\exp(Z_{ij})}{\sum_{k\in\cN_i}\exp(Z_{ik})}\, \;\,,\] and $A_{ij} = 0$ otherwise. 

\subsection{Attention Update}
For our analysis, we consider 
single-head (masked) self-attention networks (SANs). The layerwise update rule can be written as
\vspace{-1ex}
\begin{equation*}\small
    A^{(t)} = \softmax_{\cG^{(t)}}\left( X^{(t)}W^{(t)}_Q (X^{(t)}W^{(t)}_K)^\top/\sqrt{d_{QK}} \right)  
\end{equation*}
\vspace{-2ex}
\begin{equation}
     X^{(t+1)} = A^{(t)}X^{(t)}W_V^{(t)}\,,\label{eq: update_no_LN}
\end{equation}
where $W_V^{(t)}\in\bR^{d\times d'}$ is the value matrix. For simplicity, throughout the paper, we assume that $d=d'$
and $\cG^{(t)}=\cG$. Yet the results can be
easily generalized to the case where masks are time-varying and satisfy regularity conditions.

\subsection{Relative Positional Encoding}
\paragraph{Decay Mask}
The decay mask represents the relative distance between two tokens by introducing an explicit bias favoring more recent tokens. Formally, it can be written as:
\vspace{-1ex}
\begin{equation*}
     D_{ij} = 
    \begin{cases}
        {-(i-j)m} & \text{if } j \leq i \\
        0 & \text{otherwise}\,.
    \end{cases}
\end{equation*}
Then applying the decay mask is essentially 
\begin{equation}
   A_{\decay}^{(t)} = \softmax_{\cG}(X^{(t)}W_Q^{(t)}(X^{(t)}W_K^{(t)})^\top + D) \,.
   \label{eq:decay_mask_update}
\end{equation}
Note that while the decay mask formulation follows ALiBi~\cite{Press2021TrainST}, it can be generalized to more complex variants such as KERPLE~\cite{Chi2022KERPLEKR}.

\paragraph{Rotary Positional Encoding (RoPE)}

Another way to encode the relative positional information is through RoPE~\cite{su2023roformerenhancedtransformerrotary}, which applies a rotation to  query and key embeddings by an angle proportional to
the token’s position index within the sequence. Formally, the rotation operation applied to each query or key $X_{i,:}W_{\{Q,K\}}$ can be written as 
\vspace{-1ex}
\begin{equation}
    (\hat{X}_{\{Q,K\}})_{i,:} = X_{i,:}W_{\{Q,K\}}R^d_{\Theta, i}
    \label{eq:rope_to_qk}
\end{equation}
\vspace{-1ex}
where
\begin{equation*}
\begin{aligned}\small
    R^d_{\Theta, i} = &
    \resizebox{0.75\columnwidth}{!}{$
    \begin{bmatrix}
        \cos i\theta_1  & \sin i \theta_1  & 0 & 0 & \cdots & 0 & 0 \\
        -\sin i  \theta_1 & \cos i \theta_1  & 0 & 0 & \cdots & 0 & 0 \\
        0 & 0 & \cos i \theta_2 & \sin i \theta_2 & \cdots & 0 & 0 \\
        0 & 0 & -\sin i \theta_2 & \cos i \theta_2 & \cdots & 0 & 0 \\
        \vdots & \vdots & \vdots & \vdots & \ddots & \vdots & \vdots \\
        0 & 0 & & 0 & 0 & \cos i \theta_{d/2} & \sin i \theta_{d/2} \\
        0 & 0 & & 0 & 0 & -\sin i \theta_{d/2} & \cos i \theta_{d/2}
   \end{bmatrix}
    $}
\end{aligned}
\end{equation*}
is the rotation matrix with a set of pre-defined base rotational angles $\Theta = \{0\leq\theta_1\leq \cdots \leq \theta_{d/2}\}$. Then the raw attention
scores under RoPE $Z_{\RoPE}$ become
\begin{align*}
    (Z_{\RoPE})_{ij} & = (X_{i,:}W_{Q}R^d_{\theta, i})(X_{j,:}W_{K}R^d_{\theta, j})^\top\\
    & = X_{i,:}W_{Q}R^d_{\theta, i-j}W_{K}^\top X_{j,:}^\top,
\end{align*}
which distorts the original raw attention scores based on the relative token distances. The final attention scores under RoPE are calculated as $
    A^{(t)}_{\RoPE} = \softmax_{\cG}\left(Z_{\RoPE}^{(t)} \right)$.

\section{Main Results}

In the transformer model, the attention mechanism is the sole module that allows tokens to interact with one another and incorporate contextual information from the sequence. It iteratively refines the contextual representation of each token across layers, allowing information to flow and accumulate based on relevance. This concept of contextualization through attention has its origins in the development of attention mechanisms, which predate transformers~\cite{Kim2017StructuredAN, Bahdanau2014NeuralMT}. From the perspective of contextualization, the attention mechanism can be expressed in the following form~\cite{Kim2017StructuredAN}:
\vspace{-1ex}
\begin{align}
    X_{i,:}^{(t+1)} 
    & = \sum_{j=1}^N 
        \underbrace{(A^{(t)} \cdots A^{(0)})_{ij}}_{\mathclap{ \bP^{(t)}(z_i = j \mid X^{(0)})}} 
        \,\cdot\, 
        \underbrace{X^{(0)}_{j,:}W_V^{(0)} \cdots W_V^{(t)}}_{\mathclap{f^{(t)}(X^{(0)}_{z_i,:})}},
        \vspace{-3ex}
\end{align}
where \(z_i\) is a categorical latent variable with a sample space \(\{1, \ldots, N\}\) that selects the input \(X_{j,:}\) to provide context for token \(i\). In this formulation, \(A^{(t)}\) represents the attention matrix at layer \(t\), \(\bP^{(t)}(z_i = j \mid X^{(0)})\) denotes the cumulative probability of selecting input token \(j\) as the context for token \(i\) at depth $t$ , and \(f^{(t)}(\cdot)\) is a learned transformation function.

This probabilistic formulation reveals two key aspects of the attention mechanism: it acts as both a context selector and a feature aggregator. As a selector, it assigns probabilities $\bP^{(t)}$ that quantify the relevance of each token $j$ to target token $i$ at depth $t$. As an aggregator, it combines these selected contexts weighted by their respective probabilities $\bP^{(t)}$ to form the contextualized representation $X^{(t)}$.  Since position bias fundamentally manifests as systematic preferences in how tokens select and incorporate context from different positions, analyzing the attention mechanism's behavior is crucial for understanding these biases. By examining how attention masks and PEs affect the probability distribution $\bP^{(t)}$, we can investigate how position-dependent patterns emerge and propagate through multi-layer attention in transformers.

\paragraph{Attention rollout} The quantity analyzed in our framework, \(\bP^{(t)}(z_i = j \mid X^{(0)})\), coincides with the attention rollout metric proposed by~\citet{Abnar2020QuantifyingAF}. However, our work was developed independently, and the motivation is fundamentally different: whereas attention rollout serves as an empirical visualization tool applied to specific input sequences, we treat the same quantity as a theoretical object and analyze it across arbitrary inputs to reveal the model’s inductive biases.

Finally, we adopt the following assumptions in our analysis:
\vspace{-3ex}
\begin{enumerate}
    \item [\textbf{A1}] There exists  $C\in\bR$ such that 
    \[\underset{t\in\bN}{\max}\big\{\|W_Q^{(t)}\|_2, \|W_K^{(t)}\|_2\big\} \leq C\,.\] 
    \vspace{-3ex}
    \item [\textbf{A2}] The sequence $\big\{\|\prod_{t=0}^k W_V^{(t)}\|_2\big\}_{k=0}^\infty$ is bounded.
\end{enumerate}
\textbf{A1} assumes that the key and query weight matrices are bounded, which is crucial for efficient attention computation in practice~\cite{Alman2023FastAR}, whereas \textbf{A2} is to ensure boundedness of the node representations' trajectories $X^{(t)}$ for all $t\geq 0$~\cite{Wu2024OnTR}. 

\begin{figure}
    \centering   \includegraphics[width=\linewidth]{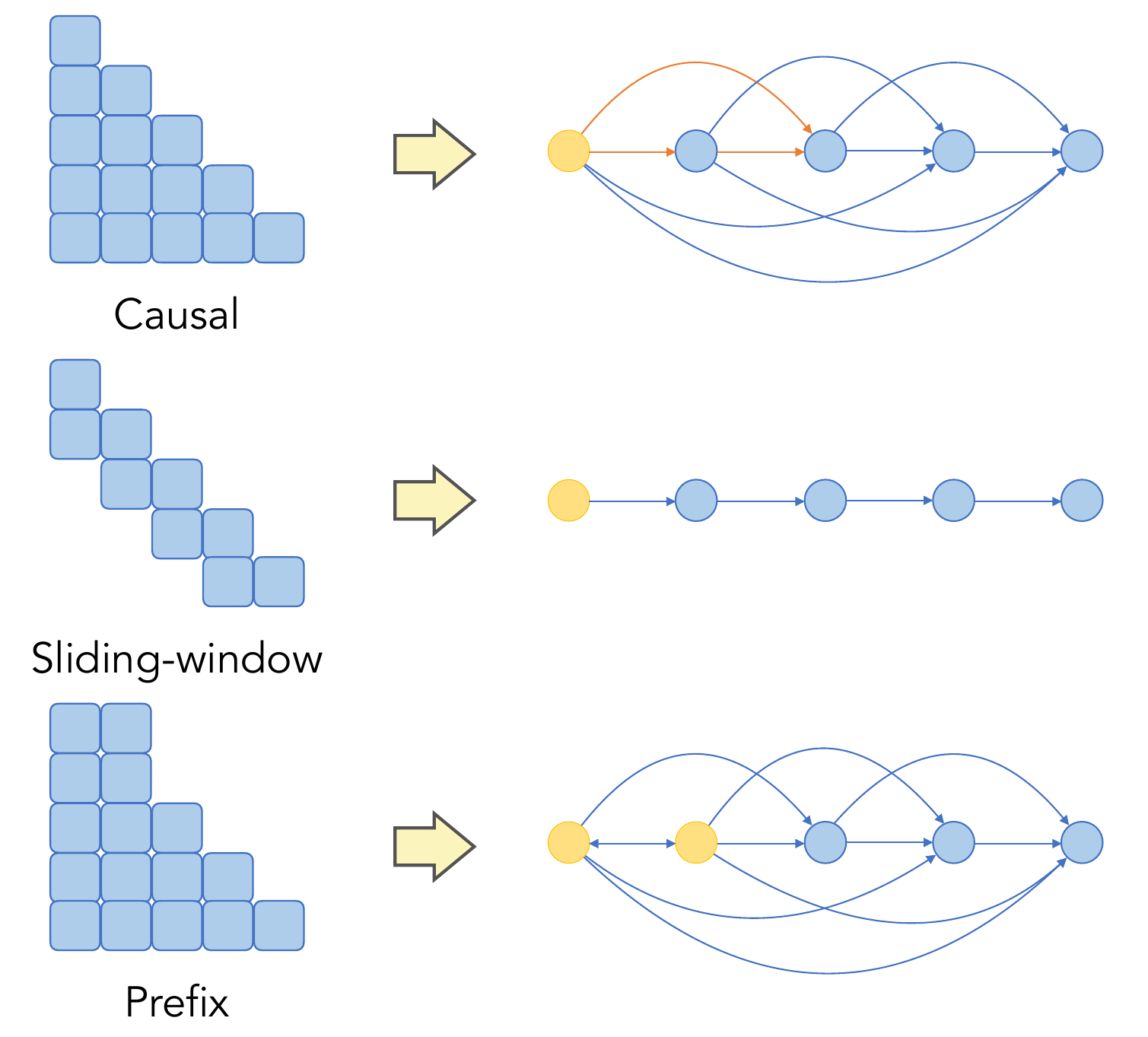}
    \vspace{0.1ex}
    \caption{\small{Three types of attention masks and their corresponding directed graphs $\cG$ used in the analysis (self-loops are omitted for clarity). A directed edge from token $j$ to $i$ indicates that $i$ attends to $j$. The center node(s) (\cref{def:center_nodes}), highlighted in yellow, represent tokens that can be directly or indirectly attended to by all other tokens in the sequence. As depicted in the top row, the graph-theoretic formulation captures both direct and indirect contributions of tokens to the overall context, providing a comprehensive view of the token interactions under multi-layer attention.}
 }
    \label{fig:masks}
\end{figure}

\subsection{Attention Masks: A Graph-Theoretic View}

We first analyze the case without PEs, focusing on the effect of attention masks. A graph-theoretic perspective offers a powerful framework for analyze multi-layer attention: the flow of attention across tokens can be represented as paths in a directed graph defined by the mask, where each path captures how information is transmitted between tokens (see~\cref{fig:masks} for an illustration). The number of steps in a path corresponds to the number of layers. By accounting for all such paths, we can quantify the cumulative influence of each token in the context computation of other tokens. 

Through the graph-theoretic view, our first result states that for a causal mask $\cG$, as tokens in deeper layers attend to increasingly more contextualized representations of earlier tokens, the context of each token converges exponentially toward the first token in the sequence.

\begin{theorem}
Let $\cG$ be the causal mask. Under $\textup{\textbf{A1}}$-$\textup{\textbf{A2}}$, 
    given $X^{(0)}\in\bR^{N\times d}$, for every token $i\in[N]$,
    \begin{equation*}
        \lim_{t\to\infty} \bP^{(t)}(z_i = 1|X^{(0)}) = 1\,.
    \end{equation*}
Moreover, there exist $0< C,\epsilon < 1$ where $N\epsilon < 1$ such that 
\begin{equation*}
        \bP^{(t)}(z_i = j|X^{(0)}) \leq  C(1-(j-1)\epsilon)^{t}\,.
    \end{equation*}
for all $1< j \leq i$ and $t\geq 0$.
\label{thm: causal_mask}
\end{theorem}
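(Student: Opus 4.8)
The plan is to exploit the lower-triangular, row-stochastic structure of the cumulative attention matrix $P^{(t)} := A^{(t)}A^{(t-1)}\cdots A^{(0)}$, whose $(i,j)$ entry equals $\bP^{(t)}(z_i=j\mid X^{(0)})$ and which satisfies the recursion $P^{(t)}=A^{(t)}P^{(t-1)}$. Since the causal mask makes each $A^{(t)}$ lower triangular with positive entries on and below the diagonal, $P^{(t)}$ is again lower triangular and row-stochastic; moreover token $1$ attends only to itself, so the first row of every $A^{(t)}$---hence of every $P^{(t)}$---equals $e_1^\top$. The entire argument rests on one quantitative ingredient: a uniform lower bound $A^{(t)}_{ik}\ge\epsilon$ on every admissible attention weight, valid for all $t$ and all $k\le i$.

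First I would establish this lower bound. Writing $X^{(t)}=P^{(t-1)}X^{(0)}W_V^{(0)}\cdots W_V^{(t-1)}$ with $P^{(t-1)}$ row-stochastic, Assumption \textbf{A2} keeps $\|X^{(t)}\|$ uniformly bounded in $t$; combined with \textbf{A1} this bounds the raw scores, $|Z^{(t)}_{ik}|\le M$, uniformly. Because each row $i$ normalizes over at most $N$ terms, $A^{(t)}_{ik}=\exp(Z_{ik})/\sum_{l\le i}\exp(Z_{il})\ge e^{-2M}/N=:\epsilon$, and this $\epsilon$ automatically satisfies $N\epsilon = e^{-2M} <1$.

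Second, I would track the tail mass $s^{(t)}_{i,j}:=\sum_{l\ge j}P^{(t)}_{il}=\bP^{(t)}(z_i\ge j)$, which upper-bounds $P^{(t)}_{ij}$. Using $P^{(t)}=A^{(t)}P^{(t-1)}$ and the lower-triangular structure (so that $s^{(t-1)}_{k,j}=0$ for $k<j$), one obtains the clean recursion $s^{(t)}_{i,j}=\sum_{k=j}^i A^{(t)}_{ik}\,s^{(t-1)}_{k,j}$. Its coefficient is $\sum_{k=j}^i A^{(t)}_{ik}=1-\sum_{k=1}^{j-1}A^{(t)}_{ik}\le 1-(j-1)\epsilon$ by Step 1. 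Setting $S^{(t)}_j:=\max_{i\ge j}s^{(t)}_{i,j}$ then gives $S^{(t)}_j\le(1-(j-1)\epsilon)\,S^{(t-1)}_j$, hence $S^{(t)}_j\le(1-(j-1)\epsilon)^t\,S^{(0)}_j$. Since $S^{(0)}_j\le 1-(j-1)\epsilon\le 1-\epsilon<1$ for $j\ge2$, taking $C:=1-\epsilon\in(0,1)$ yields $P^{(t)}_{ij}\le s^{(t)}_{i,j}\le C(1-(j-1)\epsilon)^t$, which is the second claim (trivially $0$ when $j>i$). The first claim then follows from the case $j=2$: $\bP^{(t)}(z_i\ge2)=s^{(t)}_{i,2}\le C(1-\epsilon)^t\to0$, so $\bP^{(t)}(z_i=1)=1-s^{(t)}_{i,2}\to1$.

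The main obstacle I anticipate is Step 1, namely making the uniform-in-$t$ control rigorous: one must verify that boundedness of $\{\|\prod_{s=0}^k W_V^{(s)}\|_2\}$ together with the row-stochasticity of $P^{(t-1)}$ genuinely keeps $\|X^{(t)}\|$ bounded at every depth, so that a single constant $M$---and therefore a single $\epsilon>0$ independent of $t$---works simultaneously across all layers. Once that uniform $\epsilon$ is secured, the remainder is a telescoping estimate on the tail-mass recursion and is routine.
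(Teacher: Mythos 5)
Your proposal is correct and takes essentially the same route as the paper's proof: the same key lemma (a uniform lower bound $A^{(t)}_{ik}\geq\epsilon$ for all admissible edges, obtained from \textbf{A1}--\textbf{A2} via the uniform boundedness of the token trajectories) followed by the same per-layer contraction factor $1-(j-1)\epsilon$. The only difference is bookkeeping: the paper runs the induction directly on the entries $P^{(t)}_{ij}$ via $P^{(t+1)}_{ij}=\sum_{k=j}^{i}A^{(t)}_{ik}P^{(t)}_{kj}$, whereas you propagate the equivalent bound through the tail masses $s^{(t)}_{i,j}$.
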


\cref{thm: causal_mask} reveals that in multi-layer causal attention, positional bias toward earlier sequence positions intensifies with depth -- regardless of semantic content. This phenomenon arises from the nature of multi-layer attention: starting from the second layer, tokens no longer attend to raw inputs but instead to contextualized tokens, i.e., representations transformed by prior attention layers. Combined with the sequential structure of the causal mask, this iterative process amplifies the role of earlier tokens, as they influence later ones not only as direct context but also indirectly through intermediate tokens along the path. We discuss a few intriguing implications below.

 \vspace{-2ex}
\paragraph{The role of softmax} 
      The key property that leads to the above result is that the softmax operation in the attention mechanism cannot fundamentally disconnect any directed edge in the graph $\cG$. As a result, center nodes (\cref{def:center_nodes}), which appear in the context directly or indirectly for all tokens in the sequence, will eventually gain a dominant role in the context as their direct and indirect contributions propagate through the graph. Empirically, \citet{Xiao2023EfficientSL, Gu2024WhenAS} found that changing softmax to ReLU, which can disconnect edges in the graph, indeed mitigates the emergence of attention sinks. 
\vspace{-1ex}
\paragraph{How No PE induces positional information} Previous works have observed that the causal mask alone amplifies the position bias~\cite{Yu2024MitigatePB,Wang2024EliminatingPB}. Despite these observations, it remains insufficiently understood how the causal mask captures positional information and what information is being captured. One hypothesis in~\citet{Kazemnejad2023TheIO} suggests that the causal mask may be simulating either an absolute PE or a relative PE with specific weight matrices.

\Cref{thm: causal_mask} offers a different perspective. The causal mask results in earlier tokens being utilized more frequently during computation, inducing a sequential order.
%
This 
bias aligns with the token order in the sequence. To validate this perspective, we present additional experimental results in \cref{exp:causal}, providing empirical evidence that the causal mask is not simulating any PE but instead just exhibits a bias toward the earlier parts of a sequence.

\paragraph{Trade-off between representation power and position bias} \Cref{thm: causal_mask} also highlights a trade-off between representational power and positional bias as the depth of attention layers increases. While numerous studies have demonstrated that deeper attention models are crucial for improving representation power~\cite{Yun2019AreTU, Merrill2022ThePT, Li2024ChainOT,Sanford2024OnelayerTF}, our findings reveal that these benefits come at a cost. As the model depth increases, the initial tokens in a sequence are utilized more frequently, amplifying the positional bias toward the beginning of the sequence. This trade-off underscores the importance of carefully balancing depth and positional bias in the design of attention-based architectures.

\Cref{thm: causal_mask} on the causal mask can be generalized to encompass other types of attention masks, notably the sliding-window mask~\cite{Jiang2023Mistral7, Beltagy2020LongformerTL} and the prefix mask~\cite{2020t5, Lewis2019BARTDS}. In the sliding-window mask, each token is allowed to attend to a fixed number of preceding tokens. Let $w$
denote the width of the sliding-window, representing the maximal number of tokens each token can access. The following result shows how limiting the context window size affects the propagation of contextual information in attention mechanism.
\begin{theorem}
Let $\cG$ be the sliding-window mask with width $w\geq 2$. Under $\textup{\textbf{A1}}$-$\textup{\textbf{A2}}$,  
    given $X^{(0)}\in\bR^{N\times d}$, for every token $i\in[N]$,
    \begin{equation*}\lim_{t\to\infty} \bP^{(t)}(z_i = 1|X^{(0)}) = 1\,.
    \end{equation*}
Moreover, there exist $0<C, \epsilon<1$ where $N\epsilon^{\left\lceil \frac{N-1}{w-1} \right\rceil} < 1$ such that 
\begin{equation*}
        \bP^{(t)}(z_i = j|X^{(0)}) \leq  C(1-(j-1)\epsilon^{\left\lceil \frac{N-1}{w-1} \right\rceil} )^{t/\left(2\left\lceil \frac{N-1}{w-1} \right\rceil\right)}\,.
    \end{equation*}
for all $1< j \leq i$ and $t\geq 0$.  
\label{thm:sliding-window}
\end{theorem}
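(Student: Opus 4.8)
The plan is to mimic the proof of \cref{thm: causal_mask}, but to pass from single attention steps to blocks of $L:=\lceil (N-1)/(w-1)\rceil$ consecutive steps. Write $B^{(t)}:=A^{(t)}\cdots A^{(0)}$, so that $\bP^{(t)}(z_i=j\mid X^{(0)})=B^{(t)}_{ij}$ and every $B^{(t)}$ is row-stochastic. As in the causal case, \textbf{A1}--\textbf{A2} yield a uniform lower bound: each $X^{(t)}$ is a convex combination of the rows of $X^{(0)}\prod_{s<t}W_V^{(s)}$ and hence bounded, so the raw scores $Z^{(t)}_{ik}$ are bounded and there is $\delta>0$ with $A^{(t)}_{ik}\ge\delta$ for every allowed pair $(k,i)\in E(\cG)$ and every $t$. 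The graph-theoretic crux is a reachability estimate for the sliding-window graph: since a token $i$ attends to $\{\max(1,i-w+1),\dots,i\}$, one edge advances the index by at most $w-1$, while every node has a self-loop; thus for any $l\le i$ one reaches $i$ from $l$ in $\lceil (i-l)/(w-1)\rceil\le L$ genuine forward steps, padded by self-loops to length exactly $L$. Each such path has weight at least $\delta^L$, so with $\eta:=\delta^L$ I obtain $(A^{(t)}\cdots A^{(t-L+1)})_{il}\ge\eta$ for all $l\le i$. This is exactly where $L=\lceil (N-1)/(w-1)\rceil$, the eccentricity of the center node $1$, enters.

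For the first claim I would set $q^{(t)}_i:=1-B^{(t)}_{i1}$, with $q^{(t)}_1\equiv0$ because node $1$ carries only a self-loop. Row-stochasticity gives the single-step bound $q^{(t)}_i=\sum_k A^{(t)}_{ik}q^{(t-1)}_k\le\max_k q^{(t-1)}_k$, so $\max_i q^{(t)}_i$ is non-increasing. Over a block of $L$ steps, $q^{(t)}_i=\sum_k (A^{(t)}\cdots A^{(t-L+1)})_{ik}q^{(t-L)}_k$, and taking $l=1$ in the reachability estimate together with $q_1=0$ yields the contraction $q^{(t)}_i\le(1-\eta)\max_k q^{(t-L)}_k$. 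Iterating gives $\max_i q^{(mL)}_i\le(1-\eta)^m\max_i q^{(0)}_i\to0$, and monotonicity propagates this to all $t$, establishing $\lim_{t\to\infty}B^{(t)}_{i1}=1$.

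For the second claim, fix $1<j\le i$ and track the tail mass $T^{(t)}_i:=\sum_{k\ge j}B^{(t)}_{ik}$. The forward-path structure forces $B^{(t)}_{lk}=0$ for $k>l$, hence $T^{(t)}_l=0$ for $l<j$ and $B^{(t)}_{ij}\le T^{(t)}_i$. The $L$-step recursion reads $T^{(t)}_i=\sum_{l\ge j}(A^{(t)}\cdots A^{(t-L+1)})_{il}T^{(t-L)}_l$; applying the reachability estimate to each $l\in\{1,\dots,j-1\}$ (all $\le i$) shows the $L$-step transition from $i$ places mass at least $(j-1)\eta$ on indices below $j$, whence $\sum_{l\ge j}(A^{(t)}\cdots A^{(t-L+1)})_{il}\le1-(j-1)\eta$ and $\max_i T^{(t)}_i\le(1-(j-1)\eta)\max_i T^{(t-L)}_i$. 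Starting from $T^{(0)}_i\le1$ this gives $\max_i T^{(mL)}_i\le(1-(j-1)\eta)^m$; combining with single-step monotonicity and $\lfloor t/L\rfloor\ge t/(2L)$ (for $t\ge 2L$, with the few small-$t$ terms absorbed) converts it to $B^{(t)}_{ij}\le C(1-(j-1)\eta)^{t/(2L)}$. Taking $\epsilon=\delta$ gives $\eta=\epsilon^L$ and automatically $N\epsilon^L<1$ (since $\delta<1/N$), which also keeps the base $1-(j-1)\eta$ positive, while $C\in(0,1)$ absorbs the initial transient.

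The main obstacle, and the only essential departure from the causal argument, is that a sliding-window token need not attend directly to node $1$ or to the early nodes $\{1,\dots,j-1\}$, so the one-step contraction that drives \cref{thm: causal_mask} breaks down. The fix is to work with the $L$-step transition matrix, which demands the uniform reachability lower bound; the delicate point is verifying that $L=\lceil (N-1)/(w-1)\rceil$ steps always suffice and that self-loops let every such path be made of length exactly $L$. This single graph-theoretic estimate is what degrades the per-layer rate $\epsilon$ of the causal mask into $\epsilon^{\lceil (N-1)/(w-1)\rceil}$ and introduces the $1/(2L)$ factor in the exponent.
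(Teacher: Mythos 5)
Your proposal is correct and takes essentially the same route as the paper's proof: both block the dynamics into $L=\left\lceil \frac{N-1}{w-1}\right\rceil$-step transition matrices, lower-bound every entry on or below the diagonal of these block matrices by $\epsilon^{L}$ via paths padded with self-loops, run the causal-mask contraction (``mass at least $(j-1)\epsilon^L$ escapes below position $j$'') at the block level, and use monotonicity of the relevant column maxima together with the $1/(2L)$ slack in the exponent to interpolate to all $t$. The only cosmetic differences are that the paper assumes $(w-1)\mid(N-1)$ without loss of generality and invokes the induction of \cref{thm: causal_mask} directly on the batched chain where you re-derive the contraction via tail masses, and that your separate contraction argument for $\lim_{t\to\infty}\bP^{(t)}(z_i=1|X^{(0)})=1$ is redundant, since this limit already follows from the second bound by summing over $j\geq 2$ and using row-stochasticity.
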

The above result suggests that a smaller window size $w$ helps mitigate the model's bias toward early tokens in the sequence. However, such a moderating effect has its limit -- the contextual information will still exponentially converge toward the first token over successive layers, though at a rate determined by the ratio between the sequence length $N$ and the window size $w$.

Finally, for the case of a prefix mask, where the first $K$ tokens in the sequence serve as a prefix and all subsequent tokens attend to them, contextual information exponentially converges toward these $K$ tokens rather than being dominated by just the first one, with each of these $K$ tokens having a non-trivial influence.

\begin{theorem}
Let $\cG$ be the prefix mask with the first $K$ tokens being the prefix tokens. Under $\textup{\textbf{A1}}$-$\textup{\textbf{A2}}$, 
    given $X^{(0)}\in\bR^{N\times d}$, for every token $i\in[N]$,
    \begin{equation*}\lim_{t\to\infty} \bP^{(t)}(z_i \in [K] |X^{(0)}) = 1\,,
        \label{eq: first_K_tokens_total}
    \end{equation*}
and there exists $\kappa > 0$ such that 
 \begin{equation*} 
        \liminf_{t\to\infty} \bP^{(t)}(z_i = k|X^{(0)}) \geq  \kappa. \qquad\forall k\in [K]\,.
        \label{eq: first_k_distribution}
    \end{equation*}
Moreover, there exist $0<C,\epsilon < 1$ where $N\epsilon < 1$ such that
\begin{equation*} 
        \bP^{(t)}(z_i = j|X^{(0)}) \leq  C(1-(j-K)\epsilon)^t\,.
         \label{eq: prefix_exp}
    \end{equation*}
for all $K< j \leq i$ and $t\geq 0$.
\label{thm:prefix_mask}
\end{theorem}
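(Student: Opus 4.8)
The plan is to combine a block decomposition of the attention matrices with a time-inhomogeneous random-walk reading of the rollout quantity $\bP^{(t)}(z_i=j\mid X^{(0)})=(A^{(t)}\cdots A^{(0)})_{ij}$, mirroring the argument behind \cref{thm: causal_mask}. First I would record two structural facts. Writing $P=[K]$ for the prefix tokens and $Q=\{K+1,\dots,N\}$ for the rest, the prefix mask forces $A^{(t)}_{mk}=0$ whenever $m\in P$ and $k\in Q$ (a prefix token never attends outside the prefix), so each $A^{(t)}$ is block lower-triangular with blocks $A_{PP}^{(t)},A_{QP}^{(t)},A_{QQ}^{(t)}$ and $P$ is an absorbing set. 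Dually, every token---prefix or not---attends to each prefix token, so the prefix tokens are exactly the center nodes (\cref{def:center_nodes}). Using \textbf{A1}--\textbf{A2} to bound the trajectories $X^{(t)}$ and hence the raw scores, the softmax weight on every admissible edge is bounded below by a uniform $\epsilon\in(0,1)$ with $N\epsilon<1$; in particular $A^{(t)}_{mk}\ge\epsilon$ for all $m\in[N]$ and all $k\in P$.

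I would then read $\bP^{(t)}(z_i=j)$ as the probability that an inhomogeneous walk $w_0=i\to w_1\to\cdots\to w_{t+1}=j$, whose $r$-th transition uses the corresponding matrix $A^{(t-r+1)}$, lands at $j$. By the mask each step is non-increasing in index, and once the walk enters $P$ it stays there. The technical core---and the step I expect to be the main obstacle---is the exponential bound. For $j>K$, the event $\{w_{t+1}=j\}$ forces $w_r\ge j$ for every $r$, since the walk is non-increasing and cannot re-emerge from $P$. From any node of index $\ge j$, the nodes $\{1,\dots,j-1\}$ are all admissible with weight $\ge\epsilon$ each, so the walk drops below $j$ with probability at least $(j-1)\epsilon\ge(j-K)\epsilon$; confining all $t+1$ transitions then yields $\bP^{(t)}(z_i=j)\le(1-(j-K)\epsilon)^{t+1}\le(1-\epsilon)(1-(j-K)\epsilon)^{t}$, giving the claim with $C=1-\epsilon$. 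This is precisely the confinement argument of \cref{thm: causal_mask} applied to the causal sub-structure carried by $A_{QQ}$, with $j-K$ playing the role of the distance to the center set and the extra leakage into $P$ only accelerating the decay.

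The remaining two claims follow quickly. For the total-mass statement, a prefix starting point $i\in P$ never leaves $P$, so $\bP^{(t)}(z_i\in[K])=1$ identically; for $i\in Q$ I would sum the exponential bound, $\bP^{(t)}(z_i\in[K])=1-\sum_{j>K}\bP^{(t)}(z_i=j)\ge 1-(N-K)(1-\epsilon)(1-\epsilon)^{t}\to 1$. For the per-token lower bound I would condition on the final transition, which uses $A^{(0)}$: since $A^{(0)}_{mk}\ge\epsilon$ for every $m$ and every $k\in P$, one gets $\bP^{(t)}(z_i=k)=\sum_{m}\bP(w_t=m)\,A^{(0)}_{mk}\ge\epsilon$ for all $t$ and all $k\in[K]$, whence $\liminf_{t\to\infty}\bP^{(t)}(z_i=k)\ge\epsilon=:\kappa$, with $K\kappa\le N\epsilon<1$ automatically consistent with these being probabilities. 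The only delicate bookkeeping is the uniform choice of $\epsilon$ and the transition count in the exponential bound; everything else reduces to the block structure and the universal attendance of the prefix tokens.
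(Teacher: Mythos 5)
Your proposal is correct. For the first and third claims your confinement argument is essentially the paper's route: the paper invokes \cref{thm: causal_mask} by collapsing the prefix $[K]$ into a super node of the causal graph, and your direct walk-confinement bound (all intermediate indices must stay $\geq j$, each step leaks mass $\geq (j-1)\epsilon \geq (j-K)\epsilon$ to lower indices) is the same induction carried out in place rather than by reduction; summing the exponential bounds to get the total-mass limit matches as well. Where you genuinely diverge is the per-prefix-token lower bound $\kappa$. The paper also peels off the last factor, writing $P^{(t)}_{ik}=\sum_l P^{(t:1)}_{il}A^{(0)}_{lk}$, but then runs a ratio argument: it bounds $P^{(t)}_{ik_1}/P^{(t)}_{ik_2} \geq \min_l A^{(0)}_{lk_1}/A^{(0)}_{lk_2}$ below via \cref{lem: matrix_A}, and combines this comparability of the prefix entries with the total-mass statement $\sum_{k\in[K]}P^{(t)}_{ik}\to 1$ to extract a liminf bound. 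You instead observe that $A^{(0)}_{mk}\geq\epsilon$ for \emph{every} $m\in[N]$ and $k\in[K]$ (since all tokens attend to all prefix tokens) and that $A^{(t)}\cdots A^{(1)}$ is row-stochastic, giving $P^{(t)}_{ik}\geq\epsilon$ outright. Your argument is simpler, does not need the total-mass statement as an ingredient, and yields a stronger conclusion: a uniform-in-$t$ lower bound for all $t\geq 1$ rather than only a $\liminf$. The paper's ratio argument, on the other hand, additionally shows that the limiting masses on the $K$ prefix tokens are mutually comparable (no prefix token's share can vanish relative to another's), which is slightly more refined information than a flat $\epsilon$ floor, though both suffice for the stated theorem.
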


\paragraph{Attention sink and center node} The above result connects the emergence of attention sinks to the structural role of center nodes in the graph $\cG$ defined by the mask. Specifically, in~\citet{Gu2024WhenAS}, the authors observed two interesting phenomena: 1) when using the sliding-window mask, attention sinks still appear on the absolute first token in the sequence, but not on the first token within each context window; 2) when using the prefix mask, attention sinks emerge on all prefix tokens, rather than just on the first token.

These empirical results align well with Theorems~\ref{thm:sliding-window} and \ref{thm:prefix_mask}. Our results suggest that the absolute first token and the prefix tokens act as center nodes for the sliding-window mask and prefix mask, respectively. The context for each token, after multi-layer attention, exponentially converges to these center nodes. This connection between attention sinks and center nodes suggests that attention sinks are not arbitrary artifacts but arise naturally from the underlying graph structure induced by the attention mask.

\subsection{Relative PEs: A Competing Decay Effect}

Having analyzed how attention masks bias the model toward the beginning of the sequence, we now shift our focus to studying PEs, the other key mechanism for representing positional information in transformers. 

Relative PE, as the name suggests, incorporates positional information by modifying the original attention scores in a way that reflects the relative positions of tokens. Among these, the decay mask~\cite{Press2021TrainST} explicitly introduces a distance-based decay effect into the attention mechanism. We begin by examining the effect of the decay mask on individual attention layers.

\begin{lemma}
    Consider the decay mask in~(\ref{eq:decay_mask_update}) where $\cG$ is causal. Under $\textup{\textbf{A1}}$-$\textup{\textbf{A2}}$, given $X^{(0)}\in\bR^{N\times d}$, there exists $C_{\max}, C_{\min} > 0$ such that for all $j\leq i \in[N]$ and $t\geq 0$,
    \[C_{\min} e^{-(i-j)m} \leq (A^{(t)}_{\decay})_{ij} \leq C_{\max} e^{-(i-j)m}.\]
    \vspace{-4ex}
\label{lem:decay_mask_attn}
\end{lemma}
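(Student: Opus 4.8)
The plan is to reduce the claim to a uniform boundedness statement for the raw (pre-decay) attention scores, after which the two-sided bound follows directly from the explicit softmax expression. Writing $Z^{(t)}_{ij} = (X^{(t)}W_Q^{(t)})_{i,:}(X^{(t)}W_K^{(t)})_{j,:}^\top$ for the raw scores and using that $\cG$ is causal, for every $j\le i$ we have
\[
(A^{(t)}_{\decay})_{ij} = \frac{\exp\!\big(Z^{(t)}_{ij}-(i-j)m\big)}{\sum_{k\le i}\exp\!\big(Z^{(t)}_{ik}-(i-k)m\big)}.
\]
The decay penalty factors out as $e^{-(i-j)m}$ in the numerator, so the whole argument hinges on controlling the $\exp(Z^{(t)})$ terms uniformly in $i$, $j$, and $t$.

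First I would establish a uniform bound $|Z^{(t)}_{ij}|\le B$. For this, note that each attention matrix is row-stochastic, so the rollout product $A^{(t)}\cdots A^{(0)}$ is row-stochastic; hence each row of $X^{(t+1)} = (A^{(t)}\cdots A^{(0)})X^{(0)}(W_V^{(0)}\cdots W_V^{(t)})$ is a convex combination of the rows of $X^{(0)}$ right-multiplied by the value product. Taking norms and invoking \textbf{A2} bounds $\max_{i,t}\norm{X^{(t)}_{i,:}}$ by a constant $M$. Combining this with \textbf{A1} via Cauchy--Schwarz gives $|Z^{(t)}_{ij}|\le \norm{X^{(t)}_{i,:}}\,\norm{W_Q^{(t)}}_2\,\norm{W_K^{(t)}}_2\,\norm{X^{(t)}_{j,:}}\le M^2C^2 =: B$, uniformly in $t$.

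With $B$ in hand, the two bounds are immediate. For the upper bound I would lower-bound the denominator by its diagonal term $\exp(Z^{(t)}_{ii})\ge e^{-B}$ (no decay penalty since $i-i=0$) and upper-bound the numerator by $e^{B}e^{-(i-j)m}$, yielding $C_{\max}=e^{2B}$. For the lower bound I would lower-bound the numerator by $e^{-B}e^{-(i-j)m}$ and upper-bound the denominator using $Z^{(t)}_{ik}\le B$ together with the geometric sum $\sum_{k\le i}e^{-(i-k)m}=\sum_{\ell=0}^{i-1}e^{-\ell m}\le (1-e^{-m})^{-1}$, giving $C_{\min}=e^{-2B}(1-e^{-m})$.

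The argument is short, and there is no deep obstacle beyond the uniform-in-$t$ control of the raw scores; the one point requiring care is that the lower bound relies on $m>0$ so that the decay terms in the denominator sum to a finite geometric series independent of $N$ -- this is what keeps $C_{\min}$ bounded away from zero uniformly in the sequence length.
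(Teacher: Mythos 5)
Your proposal is correct and follows essentially the same route as the paper's proof: bound the raw scores $Z^{(t)}_{ij}$ uniformly in $t$ via \textbf{A1}--\textbf{A2} (the paper packages this as its Lemma~\ref{lem: matrix_A}), then factor out $e^{-(i-j)m}$ and bound the softmax denominator above by the geometric series $e^{B}/(1-e^{-m})$ and below by an $m$-dependent constant. The only cosmetic difference is that you lower-bound the denominator by its diagonal term $e^{-B}$ where the paper uses $e^{I_{\min}}(1+e^{-m})$; both yield valid constants $C_{\max}, C_{\min}$ independent of $N$, $i$, $j$, and $t$.
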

\Cref{lem:decay_mask_attn} demonstrates that the decay mask introduces an exponential decay effect into each attention map, with the strength of the effect determined by the token distances. However, while this result characterizes the behavior of individual attention layers, the interaction between layers in a multi-layer setting leads to more intricate behaviors. Building on~\Cref{lem:decay_mask_attn} , \Cref{thm:aggregate_decay_effect} examines the cumulative effect of the decay mask across multiple layers when combined with the causal mask. 

\begin{theorem}
Consider the decay mask in~(\ref{eq:decay_mask_update}) where $\cG$ is causal. Fix $T \geq 0$. Under $\textup{\textbf{A1}}$-$\textup{\textbf{A2}}$, given $X^{(0)}\in\bR^{N\times d}$, it holds for all $j\leq i \in[N]$ and $t\leq T$, 
\[\bP_{_{\decay}}^{(t)}(z_i = j|X^{(0)})= \Theta\left({t+i-j \choose i-j} e^{-(i-j)m} \right)\,.\]
\label{thm:aggregate_decay_effect}
\vspace{-3ex}
\end{theorem}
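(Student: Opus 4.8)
The plan is to reduce the statement to a counting problem over directed paths in the causal graph and then control the per-path contribution using \Cref{lem:decay_mask_attn}. First I would recall that $\bP_{\decay}^{(t)}(z_i = j \mid X^{(0)})$ is exactly the $(i,j)$ entry of the cumulative product $A^{(t)}_{\decay} A^{(t-1)}_{\decay} \cdots A^{(0)}_{\decay}$. Expanding the product as a sum over index sequences, this entry equals $\sum A^{(t)}_{n_0 n_1} A^{(t-1)}_{n_1 n_2} \cdots A^{(0)}_{n_t n_{t+1}}$ where the sum ranges over all sequences with $n_0 = i$ and $n_{t+1} = j$. Since $\cG$ is causal, each factor is nonzero only when the column index is at most the row index, so the only surviving terms are the non-increasing sequences $i = n_0 \ge n_1 \ge \cdots \ge n_{t+1} = j$; these are precisely the length-$(t+1)$ directed paths from $i$ to $j$ in the causal graph, with self-loops permitted since each token attends to itself.

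Second, I would bound the contribution of a single path. By \Cref{lem:decay_mask_attn}, every factor satisfies $C_{\min} e^{-(n_s - n_{s+1})m} \le (A^{(t-s)}_{\decay})_{n_s n_{s+1}} \le C_{\max} e^{-(n_s-n_{s+1})m}$. Multiplying the $t+1$ factors along a path, the exponents telescope: the product of the $e^{-(n_s-n_{s+1})m}$ terms collapses to $e^{-(n_0 - n_{t+1})m} = e^{-(i-j)m}$, independent of the intermediate nodes. Hence every path contributes $\Theta(e^{-(i-j)m})$, with the hidden constants lying between $C_{\min}^{t+1}$ and $C_{\max}^{t+1}$.

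Third, I would count the paths. Writing $d_s = n_s - n_{s+1} \ge 0$ for $s = 0, \ldots, t$, a path corresponds to a nonnegative integer composition $d_0 + \cdots + d_t = i - j$ into $t+1$ parts, of which there are exactly $\binom{t + i - j}{i - j}$ by stars-and-bars. Because every factor is a strictly positive softmax entry, each of these paths genuinely contributes, so summing the per-path bounds over all $\binom{t+i-j}{i-j}$ paths yields $C_{\min}^{t+1}\binom{t+i-j}{i-j} e^{-(i-j)m} \le \bP_{\decay}^{(t)}(z_i=j\mid X^{(0)}) \le C_{\max}^{t+1}\binom{t+i-j}{i-j}e^{-(i-j)m}$, which is the claimed $\Theta$.

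The main obstacle is controlling the per-path constants uniformly: because the telescoping multiplies $t+1$ copies of the single-layer constants from \Cref{lem:decay_mask_attn}, the prefactors $C_{\min}^{t+1}$ and $C_{\max}^{t+1}$ grow (or shrink) geometrically in $t$ and cannot be absorbed into the $\Theta$ unless $t$ stays bounded. This is exactly why the statement fixes a horizon $T$ and restricts to $t \le T$: over this range the quantities $C_{\min}^{t+1}$ and $C_{\max}^{t+1}$ take only finitely many values and so are bounded below and above by positive constants depending only on $T$, making the $\Theta$-constants independent of $t$, $i$, and $j$. A secondary point worth verifying carefully is that \Cref{lem:decay_mask_attn} supplies constants that are uniform across layers and across all admissible pairs $j \le i$, which it does; granting this, the telescoping and counting steps are routine.
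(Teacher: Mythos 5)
Your proof is correct and follows essentially the same route as the paper's: expand the cumulative product as a sum over the $\binom{t+i-j}{i-j}$ monotone paths in the causal graph, apply \Cref{lem:decay_mask_attn} to each factor, and use the telescoping of the exponents $\sum_s (n_s - n_{s+1}) = i-j$. If anything, your write-up is more careful than the paper's two-line argument, since you make explicit why the restriction $t \le T$ is needed to absorb the $C_{\min}^{t+1}$ and $C_{\max}^{t+1}$ prefactors into the $\Theta$-constants, a point the paper leaves implicit.
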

Notably, if we denote \[\small L(x) = \log \left({t+x  \choose x} e^{-xm} \right)\,,\] then $L(x)$ is not a monotone function of the distance $x$ between two tokens. More precisely, under Stirling's approximation, the critical point, where the highest attention score occurs, is at $x^* = t/(e^m-1)\,.$ This means that increasing the decay strength $m$ decreases $x^*$, making the model more biased towards recent tokens,   whereas increasing the number of attention layers increases $x^*$, making the model more biased towards initial tokens.  

Compared to~\cref{lem:decay_mask_attn}, while the decay mask imposes a stronger decay effect on earlier tokens within individual attention layers, these tokens gain more cumulative importance across multiple layers. This trade-off between layer-wise decay and cross-layer accumulation transforms the initially monotonic decay pattern within each attention map into a more intricate, non-monotonic behavior when aggregated throughout the network.

\subsection{A Closer Look at RoPE}
Having analyzed the effect of the decay mask, which directly incorporates a distance-based decay into the attention score calculation, we now turn our attention to another popular form of relative positional encoding: RoPE~\cite{su2023roformerenhancedtransformerrotary}.

RoPE’s inherent complexity has made a clear theoretical understanding challenging. However, recent empirical observations in~\citet{Barbero2024RoundAR} suggest that in practice, LLMs tend to predominantly utilize feature dimensions that rotate slowly. This phenomenon introduces additional structure, enabling a more refined analysis of RoPE’s effects by focusing on these slowly rotating feature dimensions. For simplicity and without loss of generality, we consider the case where only the slowest-rotating feature dimensions with base rotational angle $\theta_1$ are used by the model, i.e. effectively reducing the embedding dimension to $d=2$. See~\cref{app:rope_d_geq_2} for more results on the general case $d\geq 2$.

Recall from \eqref{eq:rope_to_qk} that RoPE operates by rotating the original query and key embeddings by an angle proportional to the token's position index within the sequence. Similar to the decay mask, which incorporates distance-based decay into attention scores, RoPE adjusts raw attention scores via these rotations. To formalize this relationship mathematically, we define the original angle between query $q_i^{(t)} \vcentcolon= X^{(t)}_{i,:}W^{(t)}_{Q}$ and key $k_j^{(t)} \vcentcolon=X^{(t)}_{j,:}W^{(t)}_{K}$ as $\phi^{(t)}_{i,j}$. Then the following result analyzes how RoPE's position-dependent rotations systematically modify the computation of attention scores.

\begin{lemma}
    Let $\cG$ be the causal mask and $d=2$. Suppose for $t\geq 0$, $\|q_i^{(t)}\|_2, \|k_j^{(t)}\|_2 > 0$, and $|\phi^{(t)}_{i,j}| \leq \delta\theta_1$, where $\delta > 0$ and $(\delta+N-1)\theta_1 \leq \pi$. Then under $\textup{\textbf{A1}}$-$\textup{\textbf{A2}}$, given $X^{(0)}\in\bR^{N\times d}$, there exists $C_{\max}, C_{\min,} c, c' > 0$ such that for all $j\leq i \in[N]$,
    \[C_{\min} e^{-c(i-j)^2\theta_1^2} \leq (A^{(t)}_{\RoPE})_{ij} \leq C_{\max} e^{-c'(i-j)^2\theta_1^2}\,.\]
    \label{lem: attn_rope}
    \vspace{-5ex}
\end{lemma}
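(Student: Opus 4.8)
The plan is to reduce the two–dimensional RoPE score to a single planar angle and then exploit the fact that, under the stated hypotheses, this angle always lives in the monotone regime of the cosine. First I would observe that for $d=2$ the rotation matrix $R^2_{\Theta, i-j}$ is an ordinary planar rotation by the angle $(i-j)\theta_1$, so the raw score factors as
\[
(Z^{(t)}_{\RoPE})_{ij} = q_i^{(t)}R^2_{\Theta,i-j}(k_j^{(t)})^\top = \|q_i^{(t)}\|_2\,\|k_j^{(t)}\|_2\,\cos\!\big(\phi^{(t)}_{i,j}\pm (i-j)\theta_1\big),
\]
where the sign in front of $(i-j)\theta_1$ is immaterial because cosine is even. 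Abbreviate $s_{ij}\vcentcolon=(Z^{(t)}_{\RoPE})_{ij}$ and $x_{ij}\vcentcolon=\phi^{(t)}_{i,j}-(i-j)\theta_1$. The hypothesis $(\delta+N-1)\theta_1\le\pi$, together with $|\phi^{(t)}_{i,j}|\le\delta\theta_1$ and $i-j\le N-1$, guarantees $|x_{ij}|\le\pi$, which is exactly the range on which cosine is decreasing in $|x|$ and admits the two-sided quadratic comparison $1-\tfrac12 x^2\le \cos x\le 1-\tfrac{2}{\pi^2}x^2$.

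Next I would fix uniform bounds $0<m_0\le \|q_i^{(t)}\|_2\|k_j^{(t)}\|_2\le M_0$: the upper bound $M_0$ follows from $\textbf{A1}$–$\textbf{A2}$ (bounded query/key matrices along a bounded representation trajectory), and the lower bound $m_0$ from the positivity hypothesis on the norms. Combining these with the quadratic cosine bounds, and always keeping the larger constant in front of the quadratic term when I split on the sign of $\cos x_{ij}$, yields the clean uniform estimate $m_0-\tfrac{M_0}{2}x_{ij}^2\le s_{ij}\le M_0-\tfrac{2m_0}{\pi^2}x_{ij}^2$. I would then convert $x_{ij}^2$ back into $(i-j)^2\theta_1^2$ using $|\phi^{(t)}_{i,j}|\le\delta\theta_1$: the reverse triangle inequality gives $x_{ij}^2\ge \tfrac12(i-j)^2\theta_1^2-\delta^2\theta_1^2$, while the ordinary triangle inequality gives $x_{ij}^2\le 2(i-j)^2\theta_1^2+2\delta^2\theta_1^2$.

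The final step is the softmax normalization. For the upper bound I would discard all but the diagonal term in the denominator, $\sum_{k\le i}e^{s_{ik}}\ge e^{s_{ii}}\ge e^{-M_0}$, so that $(A^{(t)}_{\RoPE})_{ij}\le e^{M_0}e^{s_{ij}}$; substituting the upper estimate for $s_{ij}$ and the lower estimate for $x_{ij}^2$ produces the desired $C_{\max}e^{-c'(i-j)^2\theta_1^2}$. For the lower bound I would use that the denominator has at most $N$ terms, each at most $e^{M_0}$, so $(A^{(t)}_{\RoPE})_{ij}\ge \tfrac1N e^{-M_0}e^{s_{ij}}$; substituting the lower estimate for $s_{ij}$ and the upper estimate for $x_{ij}^2$ produces $C_{\min}e^{-c(i-j)^2\theta_1^2}$.

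I expect the main obstacle to be the bookkeeping in the middle step: because $\cos x_{ij}$ can turn negative once the tokens are far apart, the factor $\|q_i\|\|k_j\|$ must be bounded by $m_0$ in some sub-cases and by $M_0$ in others, and the constants have to be chosen so a single quadratic bound survives uniformly across all sign regimes. Keeping the angular argument inside $[-\pi,\pi]$---the content of the assumption $(\delta+N-1)\theta_1\le\pi$---is precisely what makes this possible; without it cosine would cease to be monotone in $|x_{ij}|$ and the quadratic comparison would break down. A secondary technical point is ensuring the lower norm bound $m_0$ is uniform in $t$, which I would extract from the positivity hypothesis evaluated along the bounded trajectory guaranteed by $\textbf{A1}$–$\textbf{A2}$.
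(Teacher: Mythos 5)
Your proposal is correct and follows essentially the same route as the paper's own proof: reduce to the rotated planar angle $\phi^{(t)}_{i,j}-(i-j)\theta_1$, use $(\delta+N-1)\theta_1\le\pi$ to keep this angle in the range where cosine is monotone and admits the two-sided quadratic bounds $1-x^2/2\le\cos x\le 1-(2/\pi^2)x^2$, bound the query--key norm product via \textbf{A1}--\textbf{A2}, and sandwich the softmax denominator between positive constants. Your explicit sign-splitting (keeping $M_0$ or $m_0$ in front of the quadratic term according to the sign of $\cos x_{ij}$) is in fact a more careful rendering of the paper's intermediate inequality $C_{\min}\cos((i-j+\delta)\theta_1)\le\langle q_i',k_j'\rangle\le C_{\max}\cos(|(i-j)-\delta|\theta_1)$, which glosses over exactly this issue; the one caveat is that a uniform-in-$t$ lower bound $m_0>0$ does not literally follow from the pointwise positivity hypothesis along a bounded trajectory, though the paper's proof shares this imprecision (it fixes $t$), and the downstream application in~\cref{thm: rope} only involves finitely many layers $t\le T$.
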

The result shows that by solely leveraging feature dimensions that rotate slowly, RoPE effectively induces a distance-based decay effect, which aligns with the intuition in~\citet{su2023roformerenhancedtransformerrotary}. However, it is worth noting that the decay effect induced by RoPE is significantly smaller compared to that of the decay mask (\cref{lem:decay_mask_attn}). This is because the base rotational angle $\theta_1$ is typically chosen to be small, i.e. $\approx1/10000$ per token~\cite{su2023roformerenhancedtransformerrotary, Dubey2024TheL3}, resulting in a more gradual decay.

However, similar to the case of the decay mask, when considering the effect of RoPE across multiple layers of attention, the long-term decay effects within individual layers are counteracted by the increasing influence of earlier tokens given by the causal mask.

\begin{theorem}
   Fix $T> 0$. Under the same conditions as in~\cref{lem: attn_rope} for $t \leq T$, given $X^{(0)}\in\bR^{N\times d}$, there exists $c>0$ such that for all $j\leq i \in[N]$ and $t\leq T$, 
   \[\bP_{_{\RoPE}}^{(t)}(z_i = j|X^{(0)}) = \Theta\left({t+i-j  \choose i-j} e^{-c(i-j)^2\theta_1^2} \right)\,.\]
   \label{thm: rope}
   \vspace{-3ex}
\end{theorem}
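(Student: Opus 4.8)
The plan is to mirror the argument for the decay mask in \cref{thm:aggregate_decay_effect}, reducing the rollout quantity to a sum over directed paths in the causal graph and then balancing the number of such paths against the per-edge decay. First I would expand the cumulative probability as a matrix product, $\bP_{_{\RoPE}}^{(t)}(z_i=j|X^{(0)}) = (A^{(t)}_{\RoPE}\cdots A^{(0)}_{\RoPE})_{ij}$, whose $(i,j)$ entry is a sum over all monotone index sequences $i = i_0 \geq i_1 \geq \cdots \geq i_{t+1} = j$ permitted by the causal mask. Each such sequence is a length-$(t+1)$ path from $i$ to $j$, and a stars-and-bars count gives exactly $\binom{t+i-j}{i-j}$ of them (the number of ways to split the total drop $i-j$ into $t+1$ nonnegative steps), which produces the binomial factor in the claim.

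Next I would apply \cref{lem: attn_rope} edge-by-edge, which is legitimate since the stated hypotheses are assumed for every layer $t\leq T$: along a fixed path the product of attention entries is sandwiched between $C_{\min}^{t+1}\exp(-c\,\theta_1^2\sum_a (i_a - i_{a+1})^2)$ and $C_{\max}^{t+1}\exp(-c'\,\theta_1^2\sum_a(i_a - i_{a+1})^2)$, with uniform constants from the lemma. Here lies the one genuine departure from the decay-mask proof and the main obstacle: because RoPE's single-layer decay is quadratic in distance rather than linear, the exponents do \emph{not} telescope---$\sum_a(i_a - i_{a+1})^2$ generally differs from $(i-j)^2$ and depends on how the total drop is distributed across the steps.

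To control this I would use the elementary bounds
\[\frac{(i-j)^2}{t+1} \;\leq\; \sum_{a=0}^{t}(i_a - i_{a+1})^2 \;\leq\; (i-j)^2,\]
valid because the nonnegative steps sum to $i-j$ (the left inequality is Cauchy--Schwarz; the right holds since the cross terms are nonnegative). The right bound, attained when the mass sits in one step, feeds the lower bound on $\bP_{_{\RoPE}}^{(t)}$, while the left bound, attained under an even spread, feeds the upper bound. The key leverage is that fixing $T$ makes $t+1\leq T+1$ constant and the finite length makes $i-j\leq N-1$ constant, so the discrepancy between $(i-j)^2$ and $(i-j)^2/(t+1)$ alters the exponent $c\,\theta_1^2(i-j)^2$ only by a bounded multiplicative amount; the same finiteness lets me absorb $C_{\min}^{t+1}$ and $C_{\max}^{t+1}$ into constants.

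Finally I would sum the per-path bounds over the $\binom{t+i-j}{i-j}$ paths. Since every path carries the same decay factor up to bounded constants, the sum collapses to $\Theta\!\big(\binom{t+i-j}{i-j}\,e^{-c\,\theta_1^2(i-j)^2}\big)$ for any fixed $c>0$, the freedom in $c$ coming precisely from the slack between $c$ and $c'$ in \cref{lem: attn_rope}, which is harmless over the bounded ranges of $t$ and $i-j$. I expect the only delicate bookkeeping to be tracking which extremal partition drives which direction of the inequality and confirming that the quadratic-versus-linear gap is genuinely absorbable; the remainder follows the decay-mask argument essentially verbatim.
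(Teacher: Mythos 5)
Your proposal is correct and follows essentially the same route as the paper's proof: expanding $\bP^{(t)}_{\RoPE}$ as a sum over the $\binom{t+i-j}{i-j}$ monotone paths in the causal graph, applying \cref{lem: attn_rope} edge-by-edge, and controlling the non-telescoping quadratic exponents via $\sum_a (i_a-i_{a+1})^2 \leq (i-j)^2$ for the lower bound and a Cauchy--Schwarz-type lower bound on the sum of squares for the upper bound, absorbing the resulting constant discrepancies using $t\leq T$ and $i-j\leq N-1$. If anything, your explicit bound $\sum_a(i_a-i_{a+1})^2 \geq (i-j)^2/(t+1)$ is stated more carefully than the paper's corresponding step.
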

Again, if we write\[\small L(x) = \log \left({t+x  \choose x} e^{-x^2\theta_1^2} \right)\,,\]
then, by implicit differentiation, the critical point $x^*$ is an increasing function of the depth $t$ and a decreasing function of the base rotational angle $\theta_1$ (see~\cref{app:implicit differentiation}). This implies that increasing the base rotational angle $\theta_1$ reduces the optimal distance $x^*$, amplifying the long-term decay effect and causing tokens to focus more on nearby tokens. In contrast, increasing the number of attention layers $t$ increases $x^*$ and hence deeper models become more biased toward initial tokens.

\section{Experiments}\label{sec:exp}
In this section, we validate our theoretical findings via carefully designed numerical experiments\footnote{Our code is available at \href{https://github.com/xinyiwu98/position-bias-in-attention}{github.com/xinyiwu98/position-bias-in-attention}.}. To ensure a controlled setup that enables precise manipulation of positional biases in the data, we adopt the synthetic data-generating process and simplified self-attention network framework proposed in~\citet{Reddy2023TheMB}. This setup allows us to systematically isolate and examine the effects of different components on the emergence of position bias. 
\paragraph{Task structure} Following~\citet{Reddy2023TheMB}, we adopt the following information retrieval task: The model is trained to predict the label $y_{\text{query}}$
of a target $x_{\text{query}}$ using the cross-entropy loss, given an alternating sequence of $n$ items and $n$ labels:  \( x_1, y_1, \dots, x_n, y_n, x_{\text{query}} \). The sequence is embedded in $d$ dimensions.  Each \( x_i \) is sampled from a Gaussian mixture model with $K$ classes (see~\cref{app:other_setup} for results under a fixed-vocabulary setup), and $y_i$ is the corresponding class label assigned prior to training from the total $L$ labels ($L\leq K$). The burstiness $B$ is the number of occurrences of \( x_i \) from a particular class in an input sequence.  Importantly, at least one item in the context belongs to the same class as the query. To control position bias in the training data,  $x_\text{query}$ can either be explicitly assigned to the class of a specific $x_i$, introducing position-dependent bias in the data, or randomly assigned to the class of any $x_i$, simulating a scenario without position bias in the data. 
\paragraph{Tracking position bias} To quantify position bias, we evaluate model performance using sequences containing novel classes not seen during training. Specifically, by generating new class centers for the Gaussian mixture and randomly assigning one of the $L$ existing labels to these novel classes, we ensure that the model relies on contextual information rather than memorized class features. Crucially, we can systematically vary the position of the correct answer within test sequences to measure retrieval accuracy changes, thereby isolating and quantifying position-dependent biases in the model's behavior.

\vspace{-1ex}
\paragraph{Network architecture}
The input sequences are passed through an attention-only network followed by a classifier. Each
attention layer has one attention head. The classifier is then a three-layer MLP with ReLU activations and a softmax layer which predicts the probabilities of the $L$
labels.

Following~\citet{Reddy2023TheMB}, we set $n=8$ and $d=64$. Additional experimental details are provided in~\cref{app:exps}. Despite our use of a simplified experimental setup, we observe the emergence of key phenomena documented in real-world LLMs, such as the ``lost-in-the-middle" phenomenon (\cref{exp:causal}) and the formation of attention sinks (\cref{app:attn_sinks}). This convergence between our controlled environment and real-world observations validates our choice of abstraction, suggesting that we have preserved the essential mechanisms driving position bias while enabling systematic investigation.

 \begin{figure*}[t]
    \centering
    \includegraphics[width=\linewidth]{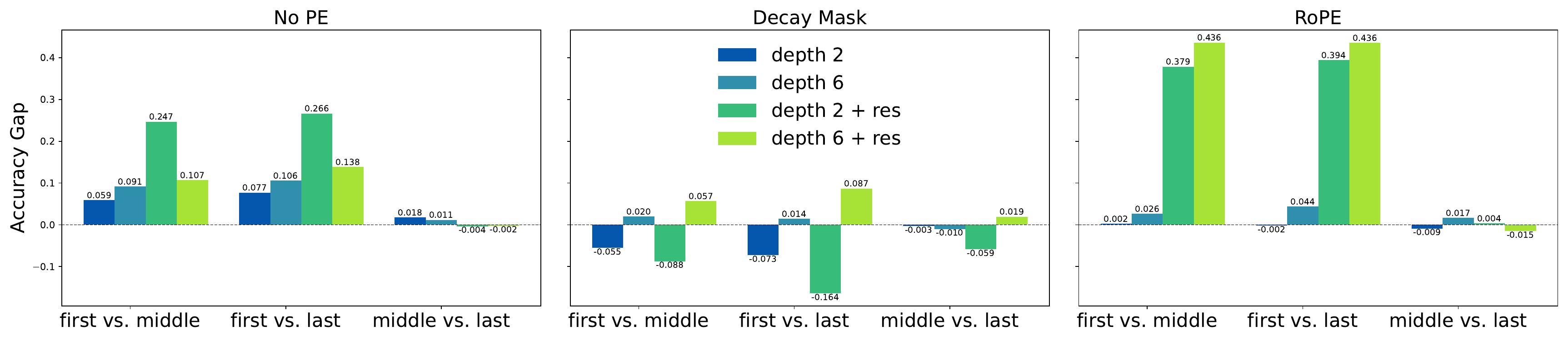}\caption{Position bias arising solely from the architectural design of the attention mechanism, with \textbf{no positional bias in the training data}. $a$ vs.\ $b$ denotes the gap for the case $[\bm{a},b]-[\bm{b},a]$, where bar magnitude indicates gap size, positive indicates bias toward earlier position, and negative indicates bias toward later position. Deeper attention amplifies the bias toward earlier tokens, regardless of the PE used. Furthermore, decay mask introduce stronger distance-based decay effects that increase focus on recent tokens than RoPE. Residual connections affect positional bias in a nontrivial, depth- and PE-dependent way (see~\cref{app:residual_connections} for more details).}
    \label{fig:depth-PE}
\end{figure*}

 \begin{figure*}[t]
    \centering
    \includegraphics[width=\linewidth]{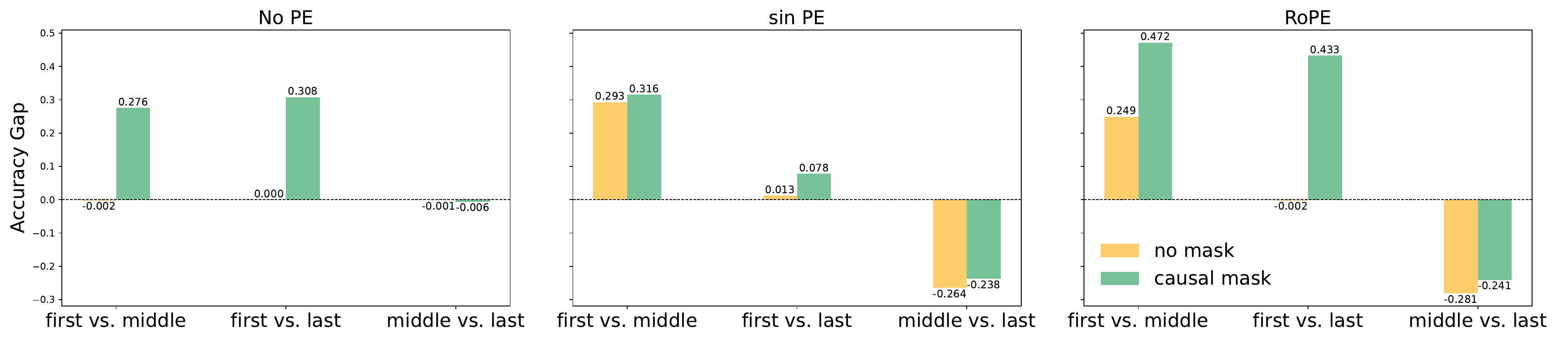}\caption{Position bias when \textbf{trained on data biased toward the first and last positions}. Compared with no mask, a causal mask without PE indeed introduces positional
dependencies. However, pure causal mask captures positional bias only at the first position but not at the last, whereas both sin PE and RoPE successfully capture biases at both ends. Moreover, the performance under PEs also displays a ``lost-in-the-middle" pattern, which is absent under other types of positional bias in the training data (see~\cref{app:additional_positional_bias} for more details). }
    \label{fig:how_causal_learn}
\end{figure*}

\subsection{The Effects of Depth and Relative PEs}\label{sec:depth}

To investigate the position bias arising solely from the architectural design of the attention mechanism, we use training sequences without positional bias, where the position of $x_i$ sharing the same class as $x_{\text{query}}$ is uniformly random in $\{1, 2, \ldots, n\}$. We evaluate two architectural variants: one without residual connections, and one with standard residual connections applied after each attention layer. The no-residual variant serves as a theoretical probe, isolating the contribution of attention alone to positional bias. In contrast, the residual variant reflects standard architectural design and enables us to examine how residual connections, in addition to attention, influence the emergence and amplification of positional bias.

To evaluate the position bias in the trained model, we construct test sequences of the form $[\bm{a}, b]$. Here, the bolded term $\bm{a}$ explicitly marks the correct position, ensuring $y_a$ matches $y_{\text{query}}$, while position $b$ serves as a baseline. In these sequences, $x_a$ and $x_b$ are identical vectors, allowing us to control for the influence of semantic information on the model's retrieval accuracy. We then measure the retrieval accuracy gap between pairs of sequences where the content at positions $a$ and $b$ is identical, but the correct position varies. This gap, defined as $[\bm{a},b]-[\bm{b},a]$, quantifies the model’s positional preference independent of semantic information. To perform this evaluation, we construct three pairs of test sets, each containing $10,000$ sequences: [\textbf{first}, middle] vs. [\textbf{middle}, first], [\textbf{first}, last] vs. [\textbf{last}, first], and [\textbf{middle}, last] vs. [\textbf{last}, middle]. Here ``first'' (position~1), ``middle'' (position~\(n/2\)), and ``last'' (position~\(n\))  
denote fixed positions within a sequence.

\Cref{fig:depth-PE} shows the average results over five runs, where $a$ vs.\ $b$ denotes the gap $[\bm{a},b]-[\bm{b},a]$.  The magnitude of each bar represents the size of the performance gap, and the sign of each bar reflects the direction of the bias: a positive sign indicates a bias toward earlier positions, while a negative sign indicates a bias toward later positions. We highlight several key observations. First, without residual connections, increasing model depth consistently amplifies the bias toward earlier parts of the sequence, regardless of the PE used (see~\cref{app:more_depth} for additional results). Also note that the performance gap between the middle and last positions is notably smaller than that involving the first position. This aligns with our theory, which suggests that as the model focuses more on the initial part of the sequence, information near the sequence's end becomes less distinguishable, consistent with the patterns observed in~\citet{Barbero2024TransformersNG}. Furthermore, both the decay mask and RoPE introduce distance-based decay effects that reduce the bias toward the beginning induced by the causal mask and increase the focus on recent tokens. However, the decay effect induced by the decay mask is substantially more pronounced than that by RoPE, as predicted by our theory.  

\paragraph{Effect of residual connections on position bias} Residual connections appear to interact with positional bias in a nontrivial way. While position bias still emerges in models with residual connections, its relationship with depth becomes non-monotonic and can be either amplified or reduced, depending on the positional encoding used and the depth regime. We provide a more detailed analysis in~\cref{app:residual_connections}.

\subsection{Can Causal Mask Induce Usable Positional Information?}\label{exp:causal}
Next, we empirically examine how the causal mask leverages positional information. \citet{Kazemnejad2023TheIO} hypothesized that without PE (No PE), the causal mask can implicitly simulate absolute or relative PE through specific weight matrix configurations. To test this hypothesis, we train models with residual connections on sequences with positional bias at either the beginning or the end. Specifically, in the training data, $x_\text{query}$ is assigned to the class of $x_1$ or $x_n$ with equal probability. We then evaluate two types of attention masks: no mask ($\cG$ is complete) and causal, and three types of PEs: No PE~\cite{Kazemnejad2023TheIO}, absolute sinusoidal PE (sin PE)~\cite{Vaswani2017AttentionIA}, and relative PE using RoPE. For evaluation, we construct six types of test sets as described in~\cref{sec:depth}, each with $10,000$ sequences. 

\Cref{fig:how_causal_learn} shows the average results using a $2$-layer network over five runs. Notably, in the left subplot, the causal mask without PE demonstrates a clear position bias toward the first position compared to the no mask without PE. This indicates that the causal mask indeed introduces a notion of position. However, when strong positional biases are present in the training data, both sin PE and RoPE allow the model to effectively capture these biases at both ends, regardless of the mask used. In contrast, a causal mask without PE only enables the model to learn a position bias at the beginning of the sequence. If the hypothesis by~\citet{Kazemnejad2023TheIO} were correct, that the causal mask uses positional information by simulating PEs, then the model should be able to capture positional bias at any location. This discrepancy suggests that the causal mask does not inherently implement PE but instead introduces a bias toward earlier positions via iterative attention, capturing positional bias only when it aligns with this predisposition.

\vspace{-5ex}
\paragraph{The role of data in creating positional bias}
It is worth noting that in~\cref{fig:how_causal_learn}, we observe the ``lost-in-the-middle" phenomenon~\cite{liu2024lost}, where information retrieval accuracy follows a U-shape relative to the position of the answer, with performance at the beginning of the sequence slightly better than at the end. More experimental results under different types of positional bias in the training data can be found in~\cref{app:additional_positional_bias}. Notably, this phenomenon does not occur when the training data lacks positional bias (\cref{fig:depth-PE})\footnote{With the sole exception occurring in the deep-layer regime with RoPE; see~\cref{app:residual_connections}.} or contains other types of positional bias considered (\cref{app:additional_positional_bias}).  This suggests that specific types of positional bias in the training data also play a role in how the model learns to prioritize positions within a sequence.

An additional observation from the fixed-vocabulary setting (see~\cref{app:other_setup} for details) is that the geometry of the token embeddings can significantly influence task difficulty. This geometric structure may also affect the emergence and nature of positional bias, as it shapes how the model learns to distinguish and attend to tokens.

\section{Conclusion}

In this paper, we study position bias in transformers through a probabilistic and graph-theoretic lens, developing a theoretical framework that quantifies how positional information influences context construction across multi-layer attention. Our analysis reveals two key findings about position bias in transformers: the causal mask's inherent bias toward earlier tokens, as deeper layers increasingly attend to these positions through iterative attention, and the interplay between causal masking and relative positional encodings, which results in a nuanced, non-monotonic balance between distance-based decay effects and the cumulative influence of earlier positions. These findings open several promising directions for future work. One potential direction is leveraging these insights to design bias-free transformers, mitigating positional biases to improve model robustness and generalization capabilities. Alternatively, our framework can also inform the strategic exploitation of positional bias in specific applications, such as emphasizing early positions for text summarization or prioritizing recent interactions in recommendation systems. Another important direction is to extend our analysis to other transformer components---such as residual connections, value projections, and MLPs---which may interact with attention in nontrivial ways, as suggested by our results in~\cref{sec:depth} and analyses based on vector norms~\cite{Kobayashi2020AttentionIN}. By deepening our understanding of how architectural choices in transformers shape positional dependencies, our work provides a foundation for designing attention mechanisms with predictable and task-aligned positional properties.

\section*{Acknowledgements}
The authors would like to thank Gautam Reddy for the help with the experimental setup and acknowledge the MIT SuperCloud and Lincoln Laboratory Supercomputing Center for providing computing resources that have contributed
to the research results reported within this paper.

XW and AJ were supported by ONR N00014-23-1-2299.  YW and SJ were supported by ONR N00014-20-1-2023 (MURI ML-SCOPE), NSF CCF-2112665 (TILOS AI Institute), and an Alexander von Humboldt Professorship.

\section*{Impact Statement}

This paper presents work whose goal is to advance the field of 
Machine Learning. There are potential societal consequences 
of our work, none which we feel must be specifically highlighted here.

\bibliography{example_paper}
\bibliographystyle{icml2025}

\newpage
\appendix
\onecolumn

\section{Proof of~\Cref{thm: causal_mask}}
\subsection{Auxiliary results}

\begin{lemma}
    Under~\textup{\textbf{A1}}-\textup{\textbf{A2}}, there exists $\epsilon > 0$ such that $A^{(t)}_{ij} \geq \epsilon$ for all $t\geq 0$, $(j,i) \in E$.
    \label{lem: matrix_A}
\end{lemma}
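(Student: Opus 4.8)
The plan is to bound every allowed attention entry from below by reducing the claim to a uniform bound on the raw attention scores, and then propagating boundedness of the representations $X^{(t)}$ through the layers using \textbf{A1}--\textbf{A2}. First I would write out the masked softmax explicitly. For a fixed target $i$ with neighborhood $\cN_i$ (which contains at most $N$ elements) and any neighbor $j$ with $(j,i)\in E$,
\[
A^{(t)}_{ij} = \frac{\exp(Z^{(t)}_{ij})}{\sum_{k\in\cN_i}\exp(Z^{(t)}_{ik})} \;\geq\; \frac{1}{N}\exp\Big(\min_{k\in\cN_i} Z^{(t)}_{ik} - \max_{k\in\cN_i} Z^{(t)}_{ik}\Big).
\]
Hence it suffices to produce a constant $B$, independent of $t$ and of the indices, with $|Z^{(t)}_{ij}|\leq B$ for all $t\geq 0$ and all edges; then $\epsilon = e^{-2B}/N$ works.

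The second step bounds $Z$ itself. Since $Z^{(t)}_{ij} = X^{(t)}_{i,:}W_Q^{(t)}(W_K^{(t)})^\top (X^{(t)}_{j,:})^\top$, Cauchy--Schwarz together with submultiplicativity of the spectral norm gives
\[
|Z^{(t)}_{ij}| \;\leq\; \|W_Q^{(t)}\|_2\,\|W_K^{(t)}\|_2\,\|X^{(t)}_{i,:}\|_2\,\|X^{(t)}_{j,:}\|_2 \;\leq\; C^2 \max_{i}\|X^{(t)}_{i,:}\|_2^2,
\]
where the last inequality invokes \textbf{A1}. So the entire statement reduces to a uniform (in $t$) bound on the row norms of $X^{(t)}$.

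This final reduction is where I expect the only genuine subtlety to lie. Unrolling the update rule~\eqref{eq: update_no_LN} yields $X^{(t)} = \big(A^{(t-1)}\cdots A^{(0)}\big)\,X^{(0)}\,\big(W_V^{(0)}\cdots W_V^{(t-1)}\big)$. The key observation is that each $A^{(s)}$ is row-stochastic, because the masked softmax normalizes over $\cN_i$, and a product of row-stochastic matrices is again row-stochastic; writing $P = A^{(t-1)}\cdots A^{(0)}$, the row $X^{(t)}_{i,:}$ is therefore the convex combination $\sum_j P_{ij} X^{(0)}_{j,:}$ of the rows of $X^{(0)}$, post-multiplied by $\prod_{s=0}^{t-1}W_V^{(s)}$. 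By convexity of the norm, $\big\|\sum_j P_{ij}X^{(0)}_{j,:}\big\|_2 \leq \max_j \|X^{(0)}_{j,:}\|_2$, so
\[
\|X^{(t)}_{i,:}\|_2 \;\leq\; \Big(\max_{j}\|X^{(0)}_{j,:}\|_2\Big)\,\Big\|\textstyle\prod_{s=0}^{t-1}W_V^{(s)}\Big\|_2.
\]
The attention factor contributes only a convex averaging and hence never inflates the norm; the value factor is bounded uniformly in $t$ precisely by \textbf{A2}. This produces the desired uniform $B$, and combining the three steps delivers a single $\epsilon>0$ valid for all $t\geq 0$ and all $(j,i)\in E$.
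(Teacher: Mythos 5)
Your proposal is correct and follows essentially the same route as the paper's proof: unroll the recursion $X^{(t)} = (A^{(t-1)}\cdots A^{(0)})X^{(0)}(W_V^{(0)}\cdots W_V^{(t-1)})$, use \textbf{A2} (plus row-stochasticity of the attention products) to bound the trajectories, apply \textbf{A1} with Cauchy--Schwarz to get a uniform bound on the raw scores $Z^{(t)}_{ij}$, and conclude a uniform lower bound on the softmax entries. You merely make explicit two steps the paper leaves implicit --- the convexity argument behind the trajectory bound and the closed-form $\epsilon = e^{-2B}/N$ --- which is a faithful filling-in rather than a different argument.
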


\begin{proof}
    Writing~\eqref{eq: update_no_LN} recursively, we get that the token trajectories 
    \begin{equation}
        X^{(t+1)} = A^{(t)}...A^{(0)}X^{(0)}W_V^{(0)}...W_V^{(t)}\,,
        \label{eq: token_traj_no_LN}
    \end{equation}
    stay uniformly bounded for all $t\geq 0$ by~\textup{\textbf{A2}}.  Then it follows from~\textup{\textbf{A1}} that there exists $C\in \bR$ such that for all $t\geq 0$,
\begin{equation}
\begin{aligned}\norm{\left(X^{(t)}W_Q^{(t)}\right)_{i,:}}_2 = \norm{X^{(t)}_{i,:}W_Q^{(t)}}_2 \leq C\,,\\
    \norm{\left(X^{(t)}W_K^{(t)}\right)_{i,:}}_2 = \norm{X^{(t)}_{i,:}W_K^{(t)}}_2 \leq C\,. 
    \end{aligned}
     \label{eq: qk_norm_bound}
    \end{equation}
   
    Hence for all $i,j\in[N]$,
    \[ - C^2 \leq (X^{(t)}W^{(t)}_Q (X^{(t)}W^{(t)}_K)^\top)_{ij} \leq C^2\,.\]
    This implies that there exists $\epsilon > 0$ such that 
    $A^{(t)}_{ij} \geq \epsilon$ for all $(j,i) \in E$.
\end{proof}

\subsection{Proof of~\cref{thm: causal_mask}}

We denote 
$P^{(t)} \vcentcolon = A^{(t)}\cdots A^{(0)}$. It suffices to show that there exists $ 0 < C < 1$ and $0 < \epsilon <1$ such that 
\begin{equation}
    P^{(t)}_{ij} \leq C(1-(j-1)\epsilon)^t
    \label{eq: causal_exponetial_decay}
\end{equation}
for all $1<j\leq i$ and $t\geq 0$. 

The proof will go by induction:
\paragraph{Base case}
By~\cref{lem: matrix_A}, it follows that 
\begin{equation*}
    P^{(0)}_{ij} \leq (1-\epsilon)
\end{equation*}
for all $1<j\leq i$. Then let $C\vcentcolon=1-\epsilon$.
\paragraph{Induction step} Assume that~\eqref{eq: causal_exponetial_decay} holds, it follows that for all $1<j\leq i$. 
\begin{equation*}
    P_{ij}^{(t+1)} = \sum_{k=j}^i A^{(t)}_{ik}P^{(t)}_{kj} \leq (1-(j-1)\epsilon)C(1-(j-1)\epsilon)^{t} = C(1-(j-1)\epsilon)^{t+1}\,.
\end{equation*}
From above, we conclude the theorem.

\section{Proof of~\cref{thm:sliding-window}}
For $t_0 \leq t_1$, we denote 
$$A^{(t_1:t_0)} = A^{(t_1)}\ldots A^{(t_0)}\,.$$
Without loss of generality, we assume in the following proof that $N-1$ can be divided by $w-1$.
\subsection{Auxiliary results}

\begin{lemma}
    Let $\cG$ be the sliding-window mask with the window size $w \geq 2$. Then there exists $c > 0$ such that for all $t_0\geq 0$, 
    \[c \leq A^{\left(t_0+\frac{N-1}{w-1}-1:t_0\right)}_{ij} \leq 1 \,,\quad\forall j\leq i\in[N]\,.\]
    \label{lem:batch_P_positive}
\end{lemma}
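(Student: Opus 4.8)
The plan is to show that after composing $m := \frac{N-1}{w-1}$ consecutive sliding-window attention matrices, every below-diagonal entry of the product is uniformly bounded away from zero. I would treat the two bounds separately. The upper bound $A^{(t_0+m-1:t_0)}_{ij}\le 1$ is immediate: each $A^{(t)}$ is row-stochastic, since its rows are normalized by $\softmax_\cG$, and a product of row-stochastic matrices is again row-stochastic, so every entry lies in $[0,1]$.

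For the lower bound I would expand the product entrywise as a sum over directed walks of length $m$ in $\cG$,
\[
A^{(t_0+m-1:t_0)}_{ij} = \sum_{i=k_0,\,k_1,\,\ldots,\,k_m=j}\ \prod_{\ell=0}^{m-1} A^{(t_0+m-1-\ell)}_{k_\ell k_{\ell+1}},
\]
where a term is nonzero only when each consecutive pair $(k_{\ell+1},k_\ell)$ is an allowed edge, i.e. $0\le k_\ell-k_{\ell+1}\le w-1$ (the sliding-window mask lets token $k_\ell$ attend to itself and to the $w-1$ tokens immediately preceding it). By \cref{lem: matrix_A}, every allowed factor is at least $\epsilon$, so any single admissible walk contributes at least $\epsilon^m$. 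Since all terms are nonnegative, it suffices to exhibit \emph{one} admissible walk of length $m$ from $i$ down to $j$ for each pair $j\le i$; then $c:=\epsilon^m>0$ works.

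The combinatorial heart is this reachability claim. A walk is admissible precisely when the total descent $i-j=\sum_{\ell=0}^{m-1}(k_\ell-k_{\ell+1})$ is expressed as a sum of $m$ step-sizes each lying in $\{0,1,\ldots,w-1\}$, with the partial sums keeping every $k_\ell$ inside $[j,i]\subseteq[1,N]$ (automatic, since the sequence only decreases). Such a decomposition of $i-j$ exists if and only if $0\le i-j\le m(w-1)$; and the choice $m=\frac{N-1}{w-1}$ gives $m(w-1)=N-1\ge i-j$, so the constraint always holds. The self-loops $A^{(t)}_{kk}\ge\epsilon$ supply the ``stay'' steps needed to pad out the descent whenever $i-j<N-1$.

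I expect the main subtlety---more bookkeeping than genuine obstacle---to lie at the left boundary of the sequence, where the window is truncated and $\cN_k=\{1,\ldots,k\}$: one must verify that \cref{lem: matrix_A}'s lower bound $\epsilon$ still applies to these shortened rows (it does, since that lemma only used the uniform bound on raw scores together with a softmax over at most $N$ entries) and that the constructed walk never leaves $[1,N]$ (it cannot, being monotone between $j$ and $i$). For cleanliness I would, as the surrounding proof does, assume $w-1\mid N-1$ so that $m$ is an integer, recovering the general case via the ceiling $\lceil (N-1)/(w-1)\rceil$ appearing in \cref{thm:sliding-window}.
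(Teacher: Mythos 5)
Your proof is correct and follows essentially the same route as the paper's: expand the entry of the product over directed walks of length $m=\frac{N-1}{w-1}$, lower-bound each admissible step by the uniform $\epsilon$ from \cref{lem: matrix_A}, and use row-stochasticity for the trivial upper bound. The only difference is that you make explicit the reachability step that the paper's terser proof leaves implicit, namely that every descent $i-j\le N-1=m(w-1)$ can be realized as $m$ steps in $\{0,\ldots,w-1\}$ padded by self-loops, so at least one admissible walk always exists.
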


\begin{proof}
    Given the connectivity pattern of the sliding-window mask $\cG$ and~\cref{lem: matrix_A}, it follows that for all $t_0\geq 0$, $A^{\left(t_0+\frac{N}{w}-1:t_0\right)}$ is a lower triangular matrix. Moreover, since $A_{ij}^{\left(t_0+\frac{N}{w}-1:t_0\right)}$ counts the aggregate probability of the walks of length $\frac{N-1}{w-1}$
 between token $i$ and token $j$ where by~\cref{lem: matrix_A}, each walk has probability at least $\epsilon^{\frac{N-1}{w-1}}$.
 
 Thus we conclude that there exists $c>0$ such that for all $t_0\geq 0$,
    \begin{equation*}A_{ij}^{\left(t_0+\frac{N}{w}-1:t_0\right)} \geq c, \quad \forall j\leq i\in[ N]\,. 
    \end{equation*}
\end{proof}

\subsection{Proof of~\cref{thm:sliding-window}}

For $k\geq 0$, denote 
\[\tilde{A}^{(k)} = A^{\left((k+1)\left(\frac{N-1}{w-1}\right)-1:k\left(\frac{N-1}{w-1}\right)\right)}\,\]
and 
\[\tilde{P}^{(k)} = \tilde{A}^{(k)}\cdots \tilde{A}^{(0)}\,.\]

Then by~\cref{lem:batch_P_positive} and~\cref{thm: causal_mask}, we get that there exists $0<C<1$ and $0<c<1$ such that for all $k\geq 0$

\begin{equation*}
\tilde{P}_{ij}^{(k)} \leq C(1-(j-1)c)^k\,, \quad \forall j\leq i\in[N]\,.
\end{equation*}

Denote $Q_j^{(t)} = \underset{1\leq i\leq N}{\max} P_{ij}^{(t)}$ and $\tilde{Q}_j^{(k)} = \underset{1\leq i\leq N}{\max} \tilde{P}_{ij}^{(k)}$. 
Then it follows that for all $k\geq 0$,
\begin{equation*}
    \tilde{Q}_j^{(k)} \leq C(1-(j-1)c)^k.
\end{equation*}

Observe that 

\begin{equation}
    \forall i\in [N], P^{(t)}_{ij} \leq Q^{(t)}_j\,,
    \label{eq: obs_1}
\end{equation}
and
\begin{equation}
    \forall j\in [N], Q_j^{(t+1)}\leq Q_j^{(t)}\,.
    \label{eq: obs_2}
\end{equation}

Let $q_j = (1-(j-1)c)^{\frac{1}{2\frac{N-1}{w-1}}}$.
Then for all $k\geq 1$ and $0\leq r < \frac{N-1}{w-1}$,
\[q_j^{k\left(\frac{N-1}{w-1}\right) +r } \geq (1-(j-1)c)^k.\]
This implies that for all $t\geq \frac{N-1}{w-1}$,
\[P_{ij}^{(t)} \leq q_j^{t} = C(1-(j-1)c)^{t/\left(2\frac{N-1}{w-1} \right)}\,.\]

As for $t<\frac{N-1}{w-1}$, notice that $P_{ij}^{(0)}\leq 1-\epsilon$ for all $j\leq i\in[N]$ by~\cref{lem: matrix_A}. Then by~\ref{eq: obs_1} and \ref{eq: obs_2}, we deduce that 
\[P_{ij}^{(t)}\leq (1-\epsilon)^\frac{t+1}{\frac{N-1}{w-1}}, \quad \forall j\leq i \in [N]. \]

We thus conclude the statement.

\section{Proof of~\cref{thm:prefix_mask}}

First note that the first and third statements:
\begin{equation}\lim_{t\to\infty} \bP^{(t)}(z_i \in [K] |X^{(0)}) = 1\,,
        \label{eq: first_K_tokens_total}
    \end{equation}

for all $\in [N]$ and 

\begin{equation*}
        \bP^{(t)}(z_i = j|X^{(0)}) \leq  C(1-(j-K)\epsilon)^t\,.
         \label{eq: prefix_exp}
    \end{equation*}
for all $K< j \leq i$ and $t\geq 0$, follow immediately from~\Cref{thm: causal_mask} by regarding the first K tokens as a super node in the 
 causal graph $\cG$ and aggregate the edges accordingly. Thus it suffices to show that there exists $\kappa > 0$ such that
 \begin{equation}
        \liminf_{t\to\infty} \bP^{(t)}(z_i = k|X^{(0)}) \geq  \kappa. \qquad\forall k\in [K]\,.
        \label{eq: first_k_distribution}
    \end{equation}
 For $t > 0$, consider $$P^{(t)}_{ik} = \sum_{l=1}^N P_{il}^{(t:1)}A_{lk}^{(0)}\,.$$

 Then for $k_1, k_2 \in [K]$, 
$$\frac{P^{(t)}_{ik_1}}{P^{(t)}_{ik_2}} = \frac{\sum_{l=1}^{\max\{i,K\}} P_{il}^{(t:1)}A_{lk_1}^{(0)}}{\sum_{l=1}^{\max\{i,K\}} P_{il}^{(t:1)}A_{lk_2}^{(0)}}\,,$$

  which then follows 
$$\frac{P^{(t)}_{ik_1}}{P^{(t)}_{ik_2}} \geq \underset{1\leq l \leq \min\{i,K\}}{\min} \frac{A^{(0)}_{lk_1}}{A^{(0)}_{lk_2}}\,.$$

 Then by~\Cref{lem: matrix_A}, there exists $C > 0$ such that for all $k_1, k_2 \in [K]$, 
$$C \leq \liminf_{t\to \infty} \frac{P^{(t)}_{ik_1}}{P^{(t)}_{ik_2}}\,.$$
Since $\lim_{t\to\infty}\sum_{k=1}^K P_{ik}^{(t)} = 1$ by~\eqref{eq: first_K_tokens_total}, we deduce~\eqref{eq: first_k_distribution} as desired.

\section{Proof of~\cref{lem:decay_mask_attn}}
Fix $t\geq 0$. Let \[Z^{(t)}_{ij} = (X^{(t)}W^{(t)}_Q)_{i,:} (X^{(t)}W^{(t)}_K)_{:,j}.\]
Following from~\cref{lem: matrix_A}, there exists $I_{\min}, I_{\max} \in \bR$ such that for all $j\leq i\in [N]$, \[Z^{(t)}_{ij}\in[I_{\min}, I_{\max} ].\]
Consider the denominator in the $\softmax(\cdot)$ operation in the calculation of $(A^{(t)}_{\decay})_{ij}$:
\begin{align*}
\sum_{k=1}^i e^{Z^{(t)}_{ik}-(i-k)m} & \geq e^{I_{\min}}  \sum_{k=0}^i e^{-(i-k)m}\\
& = e^{I_{\min}}\frac{1-e^{-(i+1)m}}{1-e^{-m}} \\
& \geq e^{I_{\min}} \frac{1-e^{-2m}}{1-e^{-m}} \\
& = e^{I_{\min}} (1+e^{-m})
\end{align*}
and 
\begin{align*}
\sum_{k=1}^i e^{Z^{(t)}_{ik}-(i-k)m} & \leq e^{I_{\max}}  \sum_{k=0}^\infty e^{-km}\\
& = \frac{e^{I_{\max}}}{1-e^{-m}}
\end{align*}
It follows that 
\begin{align*}
    (A^{(t)}_{\decay})_{ij} \leq 
\frac{e^{I_{\max}-(i-j)m}}{e^{I_{\min}}(1+e^{-m})} = C_{\max} e^{-(i-j)m}
\end{align*}
and 
\begin{align*}
    (A^{(t)}_{\decay})_{ij} \geq 
\frac{e^{I_{\min}-(i-j)m}}{e^{I_{\max}}/(1-e^{-m})} = C_{\min} e^{-(i-j)m}
\end{align*}
where $C_{\max} \vcentcolon=e^{(I_{\max}-I_{\min})}/({1+e^{-m}})$ and $C_{\min} \vcentcolon=(1-e^{-m})e^{(I_{\min}-I_{\max})}$.

\section{Proof of~\cref{thm:aggregate_decay_effect}}
Note that in the causal graph $\cG$, there are $t+i-j \choose i-j$ paths of length $t+1$ from token $j$ to token $i$.

Since going from token $j$ to token $i$ in the causal graph, the connectivity patterns ensure that the token indices along the path are non-decreasing, i.e. if we denote the directed path as $(j,l_1), (l_1,l_2),...,(l_t,i)$, it holds that $j\leq l_1\leq l_2 \leq ...\leq l_{t}\leq i$. Together with~\cref{lem:decay_mask_attn}, we conclude the theorem statement.

\section{Proof of~\cref{lem: attn_rope}}
    Fix $t\ge0$. Denote the angle after rotation to be $\psi^{(t)}_{i,j}$. Then it follows from the definition of RoPE that 
    \begin{equation*}
        \psi^{(t)}_{i,j} = \phi^{(t)}_{i,j} - (i-j)\theta_1\,.
    \end{equation*}
    Thus 
    \begin{equation*}
        |\psi^{(t)}_{i,j}| = |\phi^{(t)}_{i,j} - (i-j)\theta_1| \geq| |(i-j)\theta_1| - |\phi^{(t)}_{i,j}|| \geq |(i-j)-\delta|\theta_1\,.
    \end{equation*}

    \begin{equation*}
        |\psi^{(t)}_{i,j}| = |\phi^{(t)}_{i,j} - (i-j)\theta_1| \leq |(i-j)\theta_1| +  |\phi^{(t)}_{i,j}| \leq (i-j+\delta)\theta_1\,.
    \end{equation*}

Let the original query $i$ and key $j$ embeddings be $q^{(t)}_i \vcentcolon= X^{(t)}_{i,:}W^{(t)}_Q$ and $k^{(t)}_j\vcentcolon=  X^{(t)}_{j,:}W^{(t)}_K$, respectively, and the corresponding query $i$ and key $j$ embeddings after rotation be $q_i'^{(t)}$ and $k_j'^{(t)}$, respectively.

Since $\langle q'^{(t)}_i, k'^{(t)}_j\rangle = \|q^{(t)}_i\|_2 \|k^{(t)}_j\|_2 \cos \psi^{(t)}_{i,j}$, it follows from  that there exists $C_{\min}, C_{\max} \geq 0$ such that for all $i,j\in[N]$,
\begin{equation*}
    C_{\min} \cos ((i-j + \delta)\theta_1) \leq \langle q'^{(t)}_i, k'^{(t)}_j\rangle \leq C_{\max} \cos (|(i-j)-\delta|\theta_1)\,.
\end{equation*}
Since for $|x| \leq \pi$ there exists $c>0$ such that $1-x^2/2 \leq \cos x  \leq 1-x^2/c$, we get that
\begin{equation*}
    C_{\min} \left(1-\frac{((i-j) + \delta)^2\theta_1^2}{2}\right) \leq \langle q'^{(t)}_i, k'^{(t)}_j\rangle \leq C_{\max} \left(1-\frac{((i-j)-\delta)^2 \theta_1^2}{c}\right) \,.
\end{equation*}
and hence 
\begin{equation*}
    C_{\min} \left(1 - \delta^2\theta_1^2-(i-j)^2 \theta_1^2\right) \leq \langle q'^{(t)}_i, k'^{(t)}_j\rangle \leq C_{\max} \left(1-\frac{((i-j)^2/2-\delta^2) \theta_1^2}{c}\right) \,.
\end{equation*}

Consider $Y^{(t)}_i = \sum_{k=1}^{i}e^{\langle q'^{(t)}_i, k'^{(t)}_j \rangle}$. Then by~\eqref{eq: qk_norm_bound}, we get that there exists $Y_{\max}, Y_{\min} > 0$ such that 
\begin{equation*}
    Y_{\max} \leq Y^{(t)}_i \leq Y_{\min}\,.
\end{equation*}
We thus conclude the statement.

\section{Proof of~\cref{thm: rope}}

    Notice that
    \begin{equation}
        (P^{(t)}_{\RoPE})_{ij} = \sum_{l_1\leq\cdots\leq l_{t-1}\in[N]^{t-1}} A^{(t-1)}_{il_{t-1}}A^{(t-2)}_{l_{t-1}l_{t-2}}\cdots A^{(0)}_{l_1j}
    \end{equation}
    Given that when $\cG$ is the causal graph, due to the connectivity the directed path of length $t$ from token $j$ to token $i$ must be non-decreasing, i.e. $j\leq l_1 \leq l_2 \leq \cdots\leq l_{t-1} \leq i$, and there would be in total ${t+i-j  \choose i-j}$ such paths. For each such path $j\leq l_1 \leq l_2 \leq \cdots\leq l_{t-1} \leq i$, notice that by~\Cref{lem: attn_rope}, we get that fix $T\geq0$, there exists $C_{\min}$, $C_{\max} > 0$ such that 
    \begin{equation}
        A^{(t-1)}_{i,l_{t-1}}A^{(t-2)}_{l_{t-1},l_{t-2}}\cdots A^{(0)}_{l_1,j} \geq C_{\min} e^{-c((i-l_{t-1})^2 + (l_{t-1}-l_{t-2})^2 + \cdots + 
        (l_1-j)^2)\theta_1^2}
        \label{eq: rope_P_lower_raw}
    \end{equation}
    and 
    \begin{equation}
        A^{(t-1)}_{i,l_{t-1}}A^{(t-2)}_{l_{t-1},l_{t-2}}\cdots A^{(0)}_{l_1,j} \leq C_{\max} e^{-c'((i-l_{t-1})^2 + (l_{t-1}-l_{t-2})^2 + \cdots + 
        (l_1-j)^2)\theta_1^2}
        \label{eq: rope_P_upper_raw}
    \end{equation}

From~\eqref{eq: rope_P_lower_raw}, since $j\leq l_1 \leq l_2 \leq \cdots\leq l_{t-1} \leq i$, we further get that
  \begin{equation}
        A^{(t-1)}_{i,l_{t-1}}A^{(t-2)}_{l_{t-1},l_{t-2}}\cdots A^{(0)}_{l_1,j} \geq C_{\min} e^{-c(i-j)^2\theta_1^2}\,,
        \label{eq: rope_P_lower_final}
    \end{equation}
and similarly
 \begin{equation}
        A^{(t-1)}_{i,l_{t-1}}A^{(t-2)}_{l_{t-1},l_{t-2}}\cdots A^{(0)}_{l_1,j} \leq C_{\max} e^{-\frac{c'}{2}(i-j)^2\theta_1^2}\,.
        \label{eq: rope_P_lower_final}
    \end{equation}

\section{Implicit differentiation of $x$ with respect to $\theta_1$ and $t$}\label{app:implicit differentiation}

Recall that under RoPE, $$L(x) = \log \left({t+x  \choose x} e^{-x^2\theta_1^2} \right)\,.$$
Then by Stirling's approximation,
$$L(x) \approx \left((t+x)\log(t+x) - (t+x) \right) - (x\log x -x) - \theta_1^2x^2\,,$$
and thus 
$$L'(x) = \log\left(\frac{t+x}{x}\right) -2\theta_1 x\,.$$

Taking implicit differentiation of $t$:

\[\frac{\partial}{\partial t}\log \left(\frac{t+x}{x}\right)= \frac{-t}{x(x+t)}\frac{\partial x}{\partial t} + \frac{1}{x+t}\]

and 
\[\frac{\partial}{\partial t} 2\theta_1 x = 2\theta_1 \frac{\partial x}{\partial t}.\]

So let
\[\frac{1}{x+t} = (2\theta_1 +\frac{t}{(x+t)x})\frac{\partial x}{\partial t}\]
and thus
\[\frac{\partial x}{\partial t} = \frac{1}{2\theta_1(x+t)+\frac{t}{x}} >0\,.\]
Taking implicit differentiation of $\theta_1$:

\[\frac{\partial}{\partial \theta_1}\log \left(\frac{t+x}{x}\right)= \frac{-t}{x(x+t)}\frac{\partial x}{\partial \theta_1}\]
and 
\[\frac{\partial}{\partial \theta_1} 2\theta_1 x = 2x +2\frac{\partial x}{\partial \theta_1} \theta_1.\]
So let
\[\left(2\theta_1 +\frac{t}{x(x+t)}\right)\frac{\partial x}{\partial \theta_1} = -2x\,,\]
and thus 
\[\frac{\partial x}{\partial \theta_1} = \frac{-2x}{2\theta_1+\frac{t}{x(x+t)}} < 0\,.\]
Hence we observe that $x^*$ is an increasing function of $t$ and a decreasing function of $\theta_1$. This implies that increasing the base rotational angle $\theta_1$ reduces the optimal distance $x^*$, amplifying the long-term decay effect and causing tokens to focus more on nearby tokens. In contrast, increasing the number of attention layers $t$ increases $x^*$ and hence deeper models become more biased toward initial tokens.

\section{The effect of RoPE: case for $d\geq 2$}\label{app:rope_d_geq_2}
In this section, we present a generalized version of~\cref{thm: rope} for the case $d\geq2$. 

Let the query $q$ and key $k$ be vectors in $\bR^d$, where $d$ is even, and let $\phi$ be the angle between $q$ and $k$, which we assume to be well-defined, with:
\[\cos\phi = \frac{\langle q, k\rangle}{\|q\|_2\|k\|_2}\,.\] 

Define the length-2 segments of query $q$ and 
and key $k$ as 
\[q_l = (q_{2l-1}, q_{2l}),  \quad k_l = (q_{2l-1}, q_{2l}),\]
for $l\in [d/2]$, and let $\phi_l$ be the angle between $q_l$ and $k_l$, with:
\[\cos\phi_l = \frac{\langle q_l, k_l\rangle}{\|q_l\|_2\|k_l\|_2}\,.\]

Without loss of generality, we make the following assumption:

\begin{enumerate}[leftmargin=*, labelindent=2em]
   \item [\textbf{A3}] There exists $\beta_q, \beta_k>0$ such that $\|q^{(t)}_{l}\|_2 \geq  \beta_q \|q^{(t)}\|_2$ and $\|k^{(t)}_{l}\|_2 \geq  \beta_k \|k^{(t)}\|_2$ for all $l\in[d/2]$ for all $t\geq 0$.
\end{enumerate}

The condition means that all segments makes a nontrivial contribution to the norm. In practice, since LLMs tend to tend to predominantly utilize feature dimensions that rotate slowly~\cite{Barbero2024RoundAR}, the effective $d/2$ tends to be a small number.

Given the pre-defined set of base rotational angles $\Theta = \{0\leq\theta_1\leq \cdots \leq\theta_{d/2}\}$, we reparametrize as $\theta_i = \alpha_i\theta_1$. 
\subsection{Results}
We present the general version of~\cref{lem: attn_rope} and~\cref{thm: rope} as follows:
\begin{lemma}
    Let $\cG$ be the causal mask and $\textup{\textbf{A1}}$-$\textup{\textbf{A3}}$ hold. Suppose for $t\geq 0$, $\|q_i\|_2, \|k_j\|_2 > 0$, and $|\phi^{(t)}_{i,j}| \leq \delta\theta_1$, where $\delta > 0$ and $$\left(\sqrt{\frac{1}{\beta_q\beta_k}}\delta\pi+2(N-1)\alpha_{d/2}\right)\theta \leq 2\pi\,.$$ Then there exists $C_{\max}, C_{\min}, c,c' > 0$ such that 
    $$C_{\min} e^{-c\sum_{l=1}^{d/2}(i-j)^2\alpha_l^2\theta_1^2} \leq (A^{(t)}_{\RoPE})_{i,j} \leq C_{\max} e^{-c'\sum_{l=1}^{d/2}(i-j)^2\alpha_l^2\theta_1^2}\,.$$
    \label{lem: attn_rope_general}
\end{lemma}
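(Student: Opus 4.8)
The plan is to follow the strategy of the $d=2$ case in \cref{lem: attn_rope}, but with the crucial extra step of transferring the hypothesis on the \emph{global} query--key angle $\phi^{(t)}_{i,j}$ to bounds on the \emph{per-segment} angles. Write $a_l = \|q^{(t)}_{i,l}\|_2/\|q^{(t)}_i\|_2$ and $b_l = \|k^{(t)}_{j,l}\|_2/\|k^{(t)}_j\|_2$, so that $\sum_l a_l^2 = \sum_l b_l^2 = 1$ and, by \textbf{A3}, $a_l b_l \geq \beta_q\beta_k$. Since RoPE rotates segment $l$ by $(i-j)\alpha_l\theta_1$, the rotated inner product splits as $\langle q'^{(t)}_i, k'^{(t)}_j\rangle = \sum_{l=1}^{d/2}\|q^{(t)}_{i,l}\|_2\|k^{(t)}_{j,l}\|_2\cos\psi^{(t)}_{i,j,l}$ with $\psi^{(t)}_{i,j,l} = \phi^{(t)}_{i,j,l} - (i-j)\alpha_l\theta_1$, reducing everything to controlling the segment angles $\phi^{(t)}_{i,j,l}$ and $\psi^{(t)}_{i,j,l}$.

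The main obstacle --- and the only genuinely new ingredient relative to the single-segment argument --- is bounding $|\phi^{(t)}_{i,j,l}|$ from the global hypothesis $|\phi^{(t)}_{i,j}| \leq \delta\theta_1$. I would start from the identity $\cos\phi^{(t)}_{i,j} = \sum_l a_l b_l \cos\phi^{(t)}_{i,j,l}$ and rewrite $1 - \cos\phi^{(t)}_{i,j} = \big(1 - \sum_l a_l b_l\big) + \sum_l a_l b_l\big(1 - \cos\phi^{(t)}_{i,j,l}\big)$. Both summands are nonnegative (the first by Cauchy--Schwarz / AM--GM on the unit vectors $(a_l)$ and $(b_l)$, the second since each cosine is at most $1$), so dropping the first and using $a_l b_l \geq \beta_q\beta_k$ gives $\beta_q\beta_k\big(1 - \cos\phi^{(t)}_{i,j,l}\big) \leq 1 - \cos\phi^{(t)}_{i,j}$ for every $l$. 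Combining the elementary bounds $1 - \cos x \geq 2x^2/\pi^2$ (for $|x|\le\pi$) and $1 - \cos x \leq x^2/2$ then yields $|\phi^{(t)}_{i,j,l}| \leq \tfrac{\pi}{2}\sqrt{1/(\beta_q\beta_k)}\,\delta\theta_1 =: \delta'\theta_1$, exactly the per-segment analogue of the $d=2$ hypothesis.

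From here the argument mirrors \cref{lem: attn_rope} segment by segment. By the triangle inequality, $\big|(i-j)\alpha_l - \delta'\big|\theta_1 \leq |\psi^{(t)}_{i,j,l}| \leq \big((i-j)\alpha_l + \delta'\big)\theta_1$, and since $\alpha_l \leq \alpha_{d/2}$ the stated condition $\big(\sqrt{1/(\beta_q\beta_k)}\delta\pi + 2(N-1)\alpha_{d/2}\big)\theta_1 \leq 2\pi$ is precisely what guarantees $|\psi^{(t)}_{i,j,l}| \leq \pi$ uniformly in $l$ and in $j\le i$, so that the two-sided bound $1 - x^2/2 \leq \cos x \leq 1 - x^2/c$ applies to each $\cos\psi^{(t)}_{i,j,l}$. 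Inserting these, together with the uniform upper and lower bounds on the norm products $\|q^{(t)}_{i,l}\|_2\|k^{(t)}_{j,l}\|_2$ (upper from \eqref{eq: qk_norm_bound}, lower from \textbf{A3} and the positivity hypothesis), and using $((i-j)\alpha_l\pm\delta')^2 = (i-j)^2\alpha_l^2 \pm 2(i-j)\alpha_l\delta' + \delta'^2$ to absorb the cross terms, produces constants giving $\langle q'^{(t)}_i,k'^{(t)}_j\rangle$ sandwiched between $C_3 - C_4\sum_l(i-j)^2\alpha_l^2\theta_1^2$ and $C_1 - C_2\sum_l(i-j)^2\alpha_l^2\theta_1^2$.

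Finally I would feed these into the softmax exactly as in the $d=2$ proof: the numerator $\exp(\langle q'^{(t)}_i,k'^{(t)}_j\rangle)$ is controlled by the exponential of the above bounds, while the denominator $\sum_{k\le i}\exp(\langle q'^{(t)}_i,k'^{(t)}_k\rangle)$ is a sum of at most $N$ terms that are uniformly bounded above and below (the $k=i$ term alone gives a positive lower bound), hence lies in a fixed interval $[Y_{\min}, Y_{\max}]$. Dividing the numerator bounds by $Y_{\min}$ and $Y_{\max}$ respectively yields the claimed two-sided bound $C_{\min}e^{-c\sum_l(i-j)^2\alpha_l^2\theta_1^2} \leq (A^{(t)}_{\RoPE})_{i,j} \leq C_{\max}e^{-c'\sum_l(i-j)^2\alpha_l^2\theta_1^2}$. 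I expect the angle-transfer step via \textbf{A3} to be the crux; the remaining steps are a routine, if bookkeeping-heavy, generalization of the single-segment computation.
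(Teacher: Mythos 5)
Your proposal is correct and takes essentially the same approach as the paper: the crux angle-transfer step you identify (using \textbf{A3} and Cauchy--Schwarz to get $\beta_q\beta_k(1-\cos\phi^{(t)}_{i,j,l}) \leq 1-\cos\phi^{(t)}_{i,j}$, hence $|\phi^{(t)}_{i,j,l}| \leq \tfrac{\pi}{2}\sqrt{1/(\beta_q\beta_k)}\,\delta\theta_1$) is exactly the paper's auxiliary lemma, after which both arguments apply the $d=2$ reasoning segment by segment, sum the per-segment inner products, and normalize through the softmax. The only cosmetic difference is that you drop all but the $l$-th term right after Cauchy--Schwarz, whereas the paper keeps the full sum $\sum_l \phi_l^2 \leq \tfrac{\pi^2}{4\beta_q\beta_k}\phi^2$ (via the half-angle sine identity) before extracting the same per-segment bound.
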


\begin{theorem}
   Fix $T> 0$. Under the same conditions as in~\cref{lem: attn_rope_general} for $t\leq T$, there exists $c>0$ such that for all $t\leq T$, 
   $$(P^{(t)}_{\RoPE})_{i,j} = \Theta\left({t+i-j  \choose i-j} e^{-c\sum_{l=1}^{d/2}(i-j)^2\alpha_l^2\theta_1^2} \right)\,.$$
   \label{thm: rope_general}
   \vspace{-3ex}
\end{theorem}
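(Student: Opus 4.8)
The plan is to mirror the proof of~\cref{thm: rope} almost verbatim, replacing the single-frequency decay exponent $c(i-j)^2\theta_1^2$ everywhere by its multi-frequency version $c(i-j)^2\theta_1^2\sum_{l=1}^{d/2}\alpha_l^2$. The key observation is that all of the combinatorial and monotonicity machinery from the $d=2$ case carries over unchanged, since the extra frequency factor $\sum_{l}\alpha_l^2$ enters as a fixed multiplicative constant in the exponent and factors cleanly out of every bound. The genuinely new $d\geq 2$ geometry (handling the angle $\phi^{(t)}_{i,j}$ through the length-$2$ segments under $\textbf{A3}$) has already been absorbed into~\cref{lem: attn_rope_general}, so at this level there is essentially nothing new to do beyond bookkeeping.

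First I would expand the rollout probability over directed paths in the causal graph exactly as in the $d=2$ proof:
\[(P^{(t)}_{\RoPE})_{ij} = \sum_{j\leq l_1\leq\cdots\leq l_{t-1}\leq i} A^{(t-1)}_{il_{t-1}}\cdots A^{(0)}_{l_1 j}\,,\]
where the monotonicity $j\leq l_1\leq\cdots\leq l_{t-1}\leq i$ and the count ${t+i-j\choose i-j}$ of such paths depend only on the causal connectivity and are therefore identical to the $d=2$ setting. Next, for each fixed path I would apply~\cref{lem: attn_rope_general} edge by edge. Writing the consecutive gaps as $a_1=l_1-j,\ \ldots,\ a_t=i-l_{t-1}$ (all nonnegative, summing to $i-j$), the product telescopes in the exponent to
\[\prod_{\text{edges}} A_{\RoPE} = \Theta\!\left(e^{-c\,\theta_1^2\left(\sum_{l=1}^{d/2}\alpha_l^2\right)\sum_{s=1}^{t} a_s^2}\right)\,,\]
where the $t$-th powers of the per-layer constants $C_{\min},C_{\max}$ are bounded above and below because $t\leq T$ is fixed, and are thus swallowed into the $\Theta$.

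It then remains to sandwich $\sum_s a_s^2$ between multiples of $(i-j)^2$. For the lower bound on the product I use $\sum_s a_s^2 \leq \big(\sum_s a_s\big)^2 = (i-j)^2$, valid for nonnegative gaps; for the upper bound I use the Cauchy--Schwarz inequality $\sum_s a_s^2 \geq (i-j)^2/t \geq (i-j)^2/T$. Both bounds depend only on the endpoints $i,j$ and on $T$, so summing the identical per-path bound over all ${t+i-j\choose i-j}$ paths immediately gives the claimed $\Theta$, with the final constant $c$ absorbing $T$, the per-layer constants, and the ratio $c'/T$.

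The step I expect to be the main obstacle---though it is exactly the subtlety already present at $d=2$---is the upper bound on $(P^{(t)}_{\RoPE})_{ij}$, which requires a genuine \emph{lower} bound on $\sum_s a_s^2$. Since equal gaps give $\sum_s a_s^2=(i-j)^2/t$, this quantity degrades as $t$ grows, so the decay can only be retained by exploiting the fixed horizon $t\leq T$; this is precisely why the theorem must be stated for $t\leq T$ rather than for all $t$. Everything else---path counting, factoring out $\sum_l\alpha_l^2$, and controlling the $t$-th powers of the per-layer constants---is routine once~\cref{lem: attn_rope_general} is in hand.
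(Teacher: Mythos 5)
Your proposal is correct and takes essentially the same approach as the paper: the paper's own proof of this theorem is a one-line reduction ("a direct corollary") of Lemma~\ref{lem: attn_rope_general} combined with the monotone-path decomposition and path count ${t+i-j \choose i-j}$ from the proof of Theorem~\ref{thm: rope}, which is exactly the argument you spell out. If anything, you are more careful than the paper on the one delicate step: the lower bound $\sum_s a_s^2 \geq (i-j)^2/t \geq (i-j)^2/T$ needed for the upper bound on $(P^{(t)}_{\RoPE})_{ij}$ (the paper writes this factor as $c'/2$, which is only valid for $t\leq 2$, whereas your use of the fixed horizon $T$ is the right justification).
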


\subsection{Proofs of~\cref{lem: attn_rope_general}}
We first show the following auxiliary result:
\begin{lemma}
    Under \textup{\textbf{A3}},  it holds that
    \[|\phi_l| \leq \frac{\pi}{2}\sqrt{\frac{1}{\beta_q\beta_k}} |\phi|\,,\]
    for all $l\in[d/2]$.
\end{lemma}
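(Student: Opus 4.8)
The plan is to relate each per-segment defect $1-\cos\phi_l$ to the global defect $1-\cos\phi$, and then convert both defects into angle bounds via elementary trigonometric inequalities. The whole argument rests on the block structure of the inner product together with \textbf{A3}.

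First I would exploit the fact that $q=(q_1,\ldots,q_{d/2})$ and $k=(k_1,\ldots,k_{d/2})$ are concatenations of their length-$2$ segments, so that $\langle q,k\rangle=\sum_{l=1}^{d/2}\langle q_l,k_l\rangle$ while $\|q\|_2=\big(\sum_l\|q_l\|_2^2\big)^{1/2}$ and likewise for $k$. Applying the Cauchy--Schwarz inequality to the nonnegative vectors $(\|q_1\|_2,\ldots,\|q_{d/2}\|_2)$ and $(\|k_1\|_2,\ldots,\|k_{d/2}\|_2)$ yields the key comparison $\|q\|_2\|k\|_2\geq\sum_{l}\|q_l\|_2\|k_l\|_2$. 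Combining this with the decomposition of $\langle q,k\rangle$ shows that the global defect dominates the sum of the segmental defects:
\[\|q\|_2\|k\|_2(1-\cos\phi)=\|q\|_2\|k\|_2-\langle q,k\rangle\geq\sum_{l=1}^{d/2}\|q_l\|_2\|k_l\|_2(1-\cos\phi_l).\]

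Since $\cos\phi_l\leq 1$, every summand on the right is nonnegative, so each individual term is bounded by the left-hand side. Isolating the $l$-th term and dividing gives $1-\cos\phi_l\leq\frac{\|q\|_2\|k\|_2}{\|q_l\|_2\|k_l\|_2}(1-\cos\phi)$. Invoking \textbf{A3}, namely $\|q_l\|_2\geq\beta_q\|q\|_2$ and $\|k_l\|_2\geq\beta_k\|k\|_2$, the prefactor is at most $1/(\beta_q\beta_k)$, so $1-\cos\phi_l\leq\frac{1}{\beta_q\beta_k}(1-\cos\phi)$.

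Finally I would convert this into the desired angle inequality. Since $\phi,\phi_l\in[0,\pi]$, I can apply the two-sided bound $\tfrac{2x^2}{\pi^2}\leq 1-\cos x\leq\tfrac{x^2}{2}$, valid for $|x|\leq\pi$: using the lower bound with $x=\phi_l$ and the upper bound with $x=\phi$ gives $\frac{2\phi_l^2}{\pi^2}\leq\frac{1}{\beta_q\beta_k}\cdot\frac{\phi^2}{2}$, which rearranges to exactly $|\phi_l|\leq\frac{\pi}{2}\sqrt{\frac{1}{\beta_q\beta_k}}\,|\phi|$. The main obstacle---really the only nontrivial step---is recognizing the comparison $\|q\|_2\|k\|_2\geq\sum_l\|q_l\|_2\|k_l\|_2$ and that it forces the global defect to control each segmental defect; everything afterward is a routine application of \textbf{A3} and the Jordan-type cosine inequalities.
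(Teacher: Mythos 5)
Your proposal is correct and follows essentially the same route as the paper's proof: the same Cauchy--Schwarz comparison $\|q\|_2\|k\|_2\geq\sum_l\|q_l\|_2\|k_l\|_2$ yielding the defect inequality, the same use of \textbf{A3} to bound the prefactor, and the same Jordan-type cosine bounds (the paper phrases them via $1-\cos 2x=2\sin^2 x$ with $\tfrac{4}{\pi^2}x^2\leq\sin^2 x\leq x^2$, which is equivalent to your two-sided bound on $1-\cos x$). The only cosmetic difference is that you isolate the $l$-th summand before applying \textbf{A3}, whereas the paper keeps the full sum; both steps are trivially interchangeable.
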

\begin{proof}
    By definition, since
    \[\sum_{l=1}^{d/2}\|q_l\|_2\|k_l\|_2 \cos \phi_l = \|q\|_2\|k\|_2\cos\phi\,,\]
    then the Cauchy–Schwarz inequality implies that 
    \begin{equation}
        \sum_{l=1}^{d/2}\|q_l\|_2\|k_l\|_2 (1-\cos \phi_l) \leq \|q\|_2\|k\|_2(1-\cos\phi)\,.
        \label{eq:inner_product_identity_cs}
    \end{equation}
    By~\textbf{A3}, \eqref{eq:inner_product_identity_cs} becomes 
     \begin{equation*}
        \sum_{l=1}^{d/2} (1-\cos \phi_l) \leq \frac{1}{\beta_q\beta_k}(1-\cos\phi)\,.
    \label{eq:inner_product_identity_cs_2}
    \end{equation*}
    Given the trigonometric identity $1-\cos 2x = 2\sin^2 x$, we get that 
     \begin{equation}
        \sum_{l=1}^{d/2} \sin^2\left(\frac{\phi_l}{2}\right) \leq \frac{1}{\beta_q\beta_k}\sin^2\left(\frac{\phi}{2}\right) \,.
    \label{eq:inner_product_identity_cs_2}
    \end{equation}

    Notice for all $x\in\bR$,
    \begin{equation}
        \sin^2 x\leq x^2\,,
        \label{eq:ineq_1}
    \end{equation}
    and  for all $|x|\leq \pi/2$, 
     \begin{equation}
        \frac{4}{\pi^2}x^2 \leq \sin^2x\,.
         \label{eq:ineq_2}
    \end{equation}
    Apply~\eqref{eq:ineq_1} and~\eqref{eq:ineq_2} to~\eqref{eq:inner_product_identity_cs_2}, we get that 
    \begin{equation*}
        \sum_{i=1}^{d/2} \phi_l^2 \leq \frac{\pi^2}{4\beta_q\beta_k} \phi^2\,.
    \end{equation*}
   Hence for all $l\in[d/2]$, it follows that
   \begin{equation*}
       |\phi_l| \leq \frac{\pi}{2}\sqrt{\frac{1}{\beta_q\beta_k}} |\phi|\,.
   \end{equation*}
\end{proof}

 Denote the angle after rotation to be $\psi_{i,j,l}$. Then it follows that 
    \begin{equation*}
        \psi_{i,j,l} = \phi_{i,j,l} - (i-j)\alpha_l\theta_1\,.
    \end{equation*}
It follows similarly as in the proof of~\cref{lem: attn_rope} that 
\begin{equation*}
    C_{\min} \left(1 - \delta'^2\theta^2-(i-j)^2 \alpha_l^2 \theta_1^2\right) \leq \langle (q'_i)_l, (k'_j)_l\rangle \leq C_{\max} \left(1-\frac{((i-j)^2\alpha_l^2/2-\delta'^2) \theta_1^2}{c}\right) \,,
\end{equation*}
where $\delta' = \frac{\pi}{2}\sqrt{\frac{1}{\beta_q\beta_k}}  \delta  $, for all $l\in[d/2]$.

Since \[\langle q_i',k_j'\rangle = \sum_{l=1}^{d/2} \langle (q'_i)_l, (k'_j)_l \rangle,\] we conclude the statement.

\subsection{Proof of~\cref{thm: rope_general}}
The result is a direct corollary of~\cref{lem: attn_rope} and~\cref{thm: rope}.

\section{Experiments}\label{app:exps}

Here we provide more details on the numerical experiments presented in~\cref{sec:exp}. All models were
implemented with PyTorch~\cite{Paszke2019PyTorchAI}.

\paragraph{Parameterizing the data distribution} As defined in~\cref{sec:exp}, the input data distribution is modulated by tuning various parameters. In addition to the parameters described in the main text, for the Gaussian mixture with $K$ classes, 
each class $k$ is defined by a $d$-dimensional
vector $\mu_k$ whose components are sampled $i.i.d.$ from a normal distribution with mean zero and variance $1/d$. Then the value of $x_i$ is given by $\mu_k+\epsilon\eta_i/\sqrt{1+\epsilon^2}$, where $\eta_i$ is drawn from the same distribution as the $\mu_k$’s and $\epsilon$ sets the within-class variability. Each class is assigned to one of $L$ labels ($L\leq K$). The contents of the labels are drawn prior to training from the same distribution as the $\mu_k$'s.

In~\citet{Reddy2023TheMB}, the author found that different configurations of the data generating process give rise to different learning regimes. To enable better information retrieval ability of the model, we choose the configuration suggested by~\citet{Reddy2023TheMB} that corresponds to the difficult in-weight learning and easy in-context-learning regime to ensure the information retrieval ability of the model. Specifically, we set $\gamma=0.75$, $K=2048$, $L=32$, and $B = 4$.

\paragraph{Relative PE hyperparameters} For the decay mask, we set $m=-\log(0.8) \approx 0.223$. For RoPE, we set $\theta_i = 10000^{-2(i-1)/d}$, as in~\citet{su2023roformerenhancedtransformerrotary}.

\paragraph{Compute} We trained all of our models on a Tesla V100 GPU.

\paragraph{Training details} In all experiments, we used the AdamW optimizer~\cite{Loshchilov2017DecoupledWD} with a learning rate of $10^{-3}$, a weight decay of $10^{-6}$, a batch size of $128$, and trained for $100,000$ iterations.

\section{Additional Experimental Results}\label{app:additional_exps}

\subsection{The effect of depth and relative PEs}\label{app:more_depth}
Here, we present additional experimental results on the effect of attention depth on positional bias. Since models without residual connections become increasingly difficult to train as depth increases, we limit our experiments to depths of 2, 4, and 6. The results are shown in~\cref{fig:attn-depth}. We observe that deeper attention amplifies the bias toward earlier tokens, regardless of the PE
used. Furthermore, both decay mask and RoPE introduce distance-based decay effects that increase focus on recent tokens, but the one induced by decay mask is stronger.

\begin{figure*}[t]
    \centering
    \includegraphics[width=\linewidth]{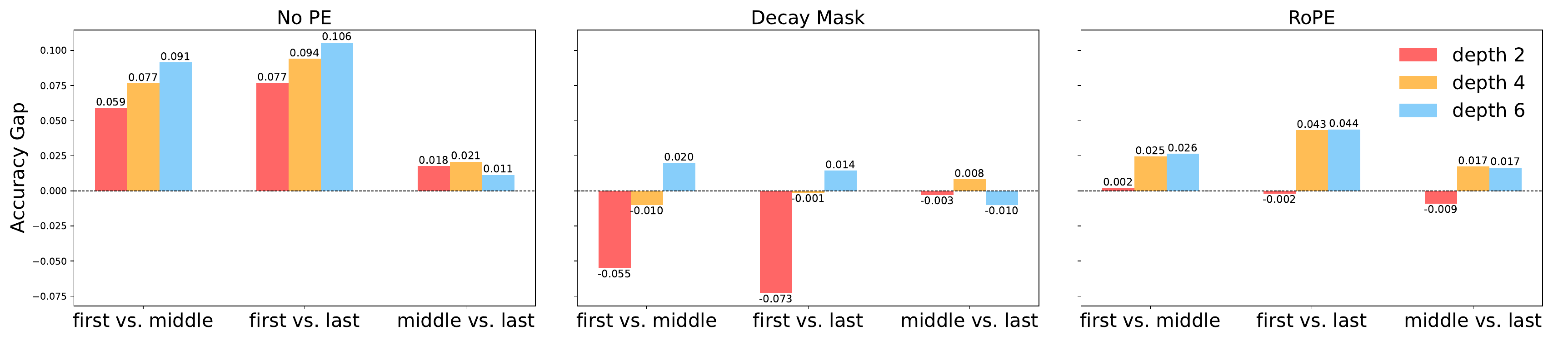}\caption{Results with pure attention layers: position bias arising solely from the architectural design of the attention mechanism, with \textbf{no positional bias in the training data}. }
    \label{fig:attn-depth}
\end{figure*}

\subsection{The effect of residual connections on position bias}\label{app:residual_connections}
In this section, we present additional experimental results on the effect of residual connections on the positional bias exhibited by the model. The results are shown in~\cref{fig:res}. As discussed in~\cref{sec:depth}, we observe that with residual connections, the relationship between model depth and positional bias becomes non-monotonic, and the strength of the bias can be either amplified or reduced depending on the positional encoding used and the depth regime. These findings suggest that residual connections modulate the accumulation of positional effects across layers in a more complex manner than with pure attention alone. It would be interesting to further explore and characterize how residuals influence the emergence and amplification of positional bias.

 \begin{figure*}[t]
    \centering
    \includegraphics[width=\linewidth]{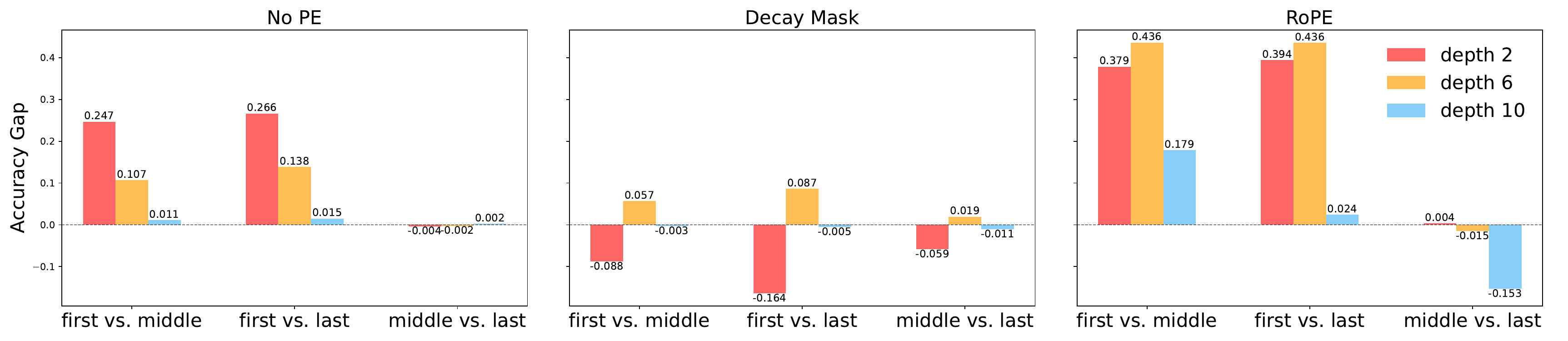}\caption{Results with residual connections: position bias arising solely from the architectural design of the attention mechanism, with \textbf{no positional bias in the training data}. }
    \label{fig:res}
\end{figure*}

\subsection{The role of training data on positional bias}\label{app:additional_positional_bias}

\begin{figure}[t]
    \centering
    \begin{subfigure}
        \centering
        \includegraphics[width=\linewidth]{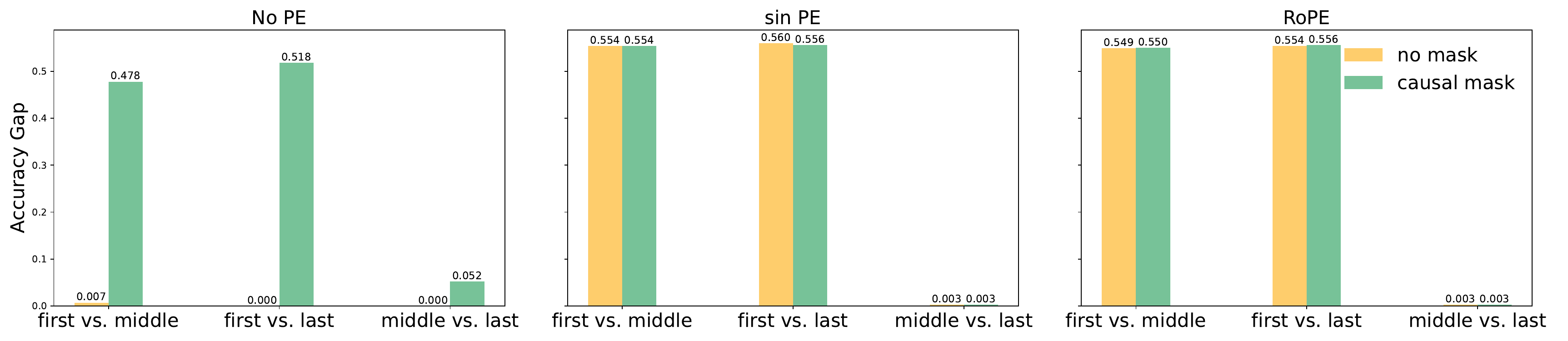}
        \caption{Position bias exhibited by the model when trained on data biased toward the \textbf{first} position.}
        \label{fig:bias_0}
    \end{subfigure}
    \vspace{0.5cm} 
    \begin{subfigure}
        \centering
        \includegraphics[width=\linewidth]{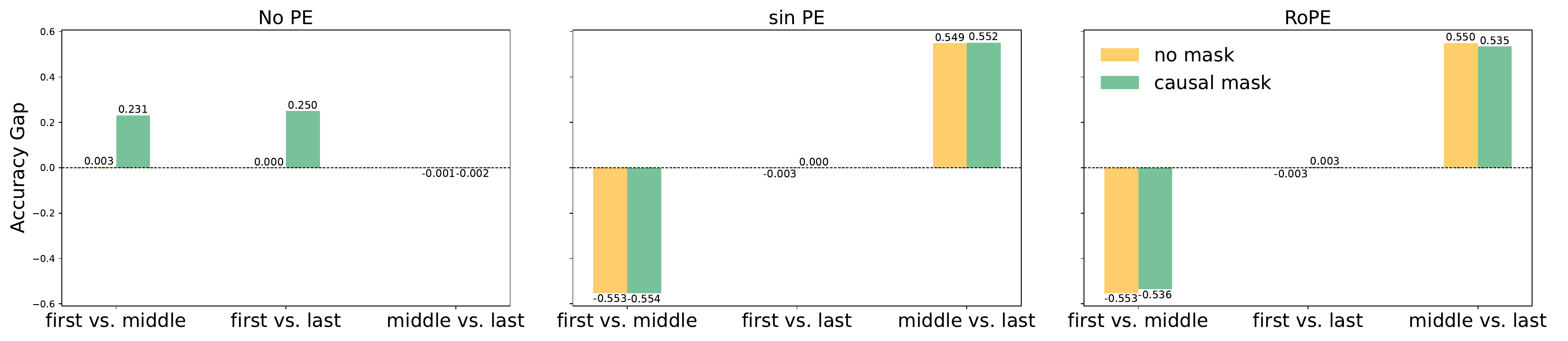}
        \caption{Position bias exhibited by the model when trained on data biased toward the \textbf{middle} position.}
        \label{fig:bias_4}
    \end{subfigure}
    \vspace{0.5cm}
    \begin{subfigure}
        \centering
        \includegraphics[width=\linewidth]{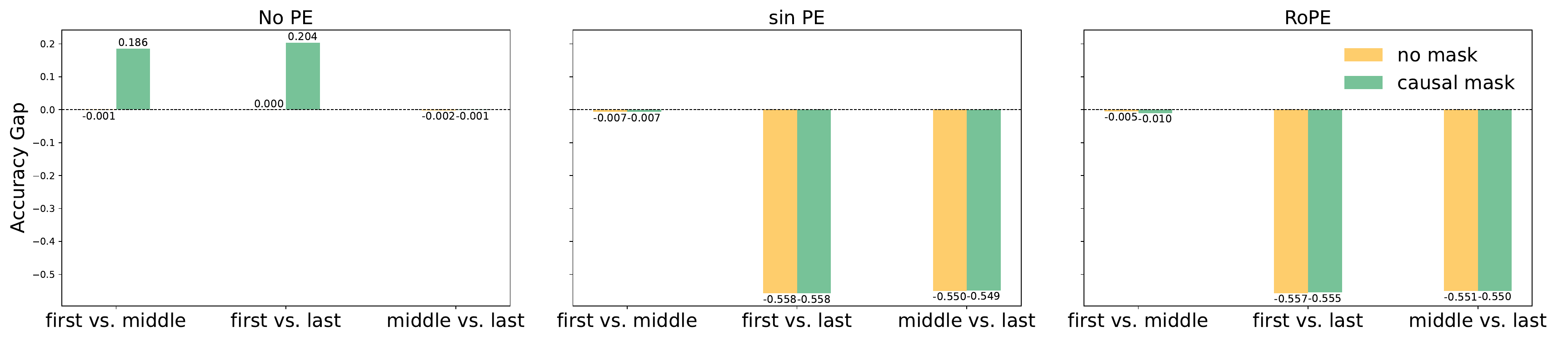}
        \caption{Position bias exhibited by the model when trained on data biased toward the \textbf{last} position.}
        \label{fig:bias_-1}
    \end{subfigure}
    \label{fig:combined_bias}
    \vspace{-2ex}
\end{figure}

In this section, we present additional experimental results building on the experiment described in~\cref{exp:causal}, but focusing on cases where positional bias in the training sequences is introduced at other positions. Specifically, we consider three types of training sequences where $x_\text{query}$ is assigned the class of 1) $x_1$ (the first position), 2) $x_{n/2}$ (the middle position), or 3) $x_n$ (the last position). The corresponding results are shown in~\cref{fig:bias_0}, \cref{fig:bias_4}, and \cref{fig:bias_-1}, respectively.

Observe that, compared with no mask, the causal mask without PEs indeed introduces a sense of position across all cases. Specifically, it enables the model to learn a positional bias favoring the beginning of the sequence, as earlier tokens tend to receive more attention through the mechanism of iterative attention. In contrast, both sin PE and RoPE allow the model to effectively capture different positional biases regardless of their location in the training sequences.

Interestingly, when comparing this behavior to the case shown in~\cref{fig:how_causal_learn}, we note that the ``lost-in-the-middle'' phenomenon only emerges when the training sequences are biased toward both the beginning and the end (the only other exception occurs in the deep-layer regime with RoPE, despite the absence of positional bias in the training data; see~\cref{app:residual_connections}). This suggests that specific types of positional bias in the training data could also play a crucial role in shaping how the model learns to process and prioritize positions within a sequence.

As the structure of positional bias in natural language remains unclear, this observation raises the following question:
\begin{quote}
    \textit{Does positional bias in natural language sequences shape the ``lost-in-the-middle'' phenomenon in a similar way to what we observe in this simplified case?}
\end{quote}

This question connects to broader inquiries about the parallels between artificial and human attention. In neuroscience, the primacy-recency effect highlights that human attention often gravitates toward the beginning and end of sequences~\cite{Glanzer1966TwoSM,Li2024EEGDT}, a phenomenon that may have influenced the structure of human languages, where critical information is frequently placed in these positions~\cite{Halliday2004AnIT}. As demonstrated in~\cref{exp:causal}, when such patterns are present in training data, attention-based architectures seem to develop analogous biases~\cite{Hollenstein2021MultilingualLM}, aligning with natural language characteristics for improved performance. This raises deeper, perhaps philosophical questions: To what extent are these biases intrinsic to effective sequential processing? How closely should neural networks emulate human cognitive patterns? Investigating these connections can deepen our understanding of both human and artificial intelligence while guiding the design of more effective machine learning models.

\subsection{Attention sinks}\label{app:attn_sinks}

Despite our use of a simplified experimental setup in this work, we observe the emergence of key phenomena documented in more complex settings. In addition to the ``lost-in-the-middle" phenomenon discussed in~\cref{exp:causal} and~\cref{app:additional_positional_bias}, in this section, we report the formation of attention sinks in our setting.

\begin{figure}[h]
    \centering
    \includegraphics[width=0.75\linewidth]{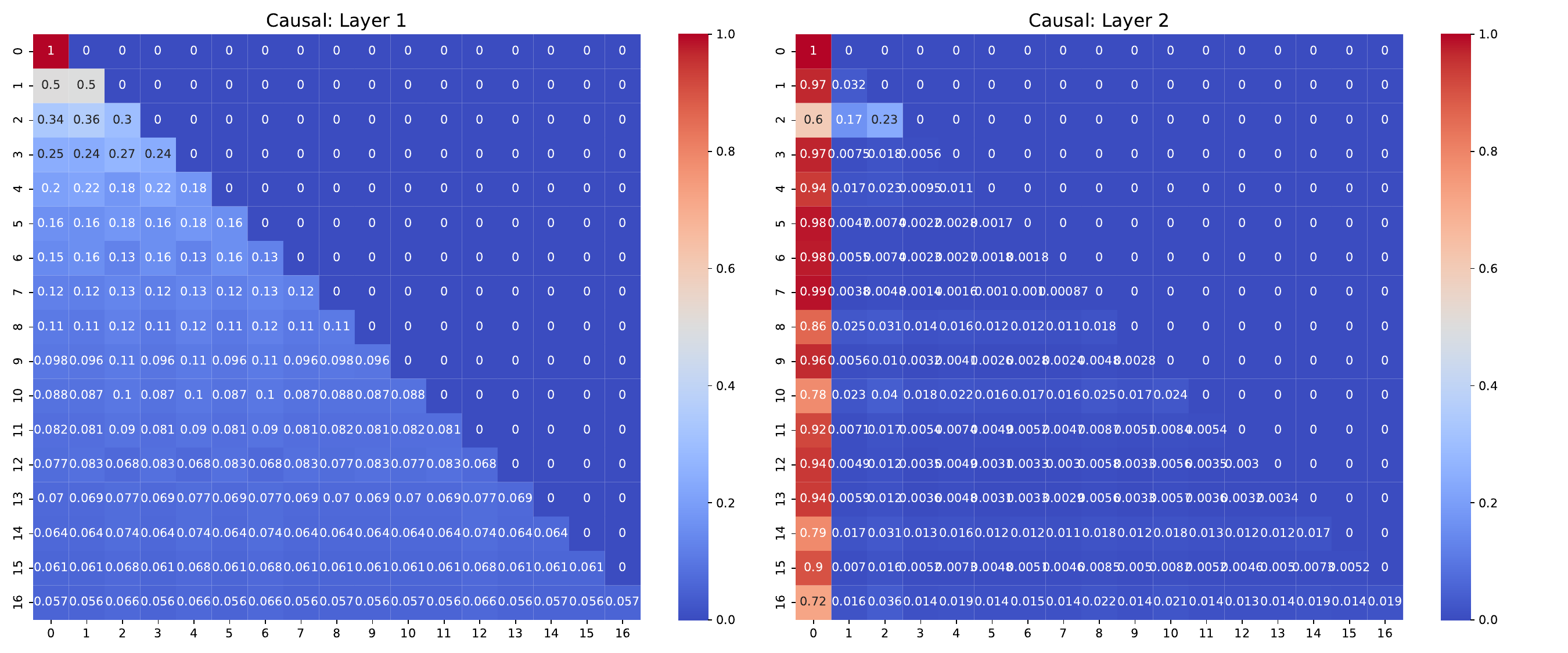}
    \caption{Example of the emergence of attention sinks in our experimental setting. In particular, the sequences used for training and inference are all \textbf{free of position bias}.}
    \label{fig:attn_sinks_causal}
\end{figure}

\cref{fig:attn_sinks_causal} shows an example of the attention maps of a two-layer self-attention networks under the causal mask without PEs, where the sequences used for training and inference are all free of position bias. We observe the similar phenomenon of attention sinks as reported in~\citet{Xiao2023EfficientSL}.

More quantitatively, following~\citet{Gu2024WhenAS}, we calculate their metric for measuring the emergence of attention sinks, over $10,000$ sequences free of position bias. 
Specifically, denote the adjacency matrix of the mask $\cG$ to be $M$. Then the metric for attention sink at token $j$ is calculated as 
\[\text{Attention Sink}_{j} = \frac{1}{T}\sum_{t=0}^{T-1} \frac{1}{\sum_{i=1}^NM_{ij}}\sum_{i=1}^{N} \1\{A^{(t)}_{ij} > \tau\}\,.\]
The threshold $\tau$ we choose is $0.2$. The results for the causal mask, the sliding-window masks (with $w=5,9,13$), and the prefix masks (with $K=2,4,6$) are shown in~\cref{fig:attnsink_causal}, \cref{fig:attnsink_window}, and \cref{fig:attnsink_prefix}, respectively.
In particular, we make the following observations: 

\begin{enumerate}
     \item Attention sinks emerge on the absolute first token under the causal mask. 
    \item Attention sinks tend to emerge on the absolute first token when the window size $w$ is larger, under the sliding-window mask.
    \item Attention sinks emerge on the $K$ prefix tokens, not just on the first token alone, under the prefix mask.
\end{enumerate}
   All of these phenomena have been observed in real-world LLMs in~\citet{Gu2024WhenAS}. This alignment between our controlled setup and real-world observations affirms the validity of our abstraction, indicating that we have captured the key mechanisms underlying position bias while facilitating a systematic analysis.
\begin{figure}[h]
    \centering
    \begin{subfigure}
\centering\includegraphics[width=0.34\linewidth]{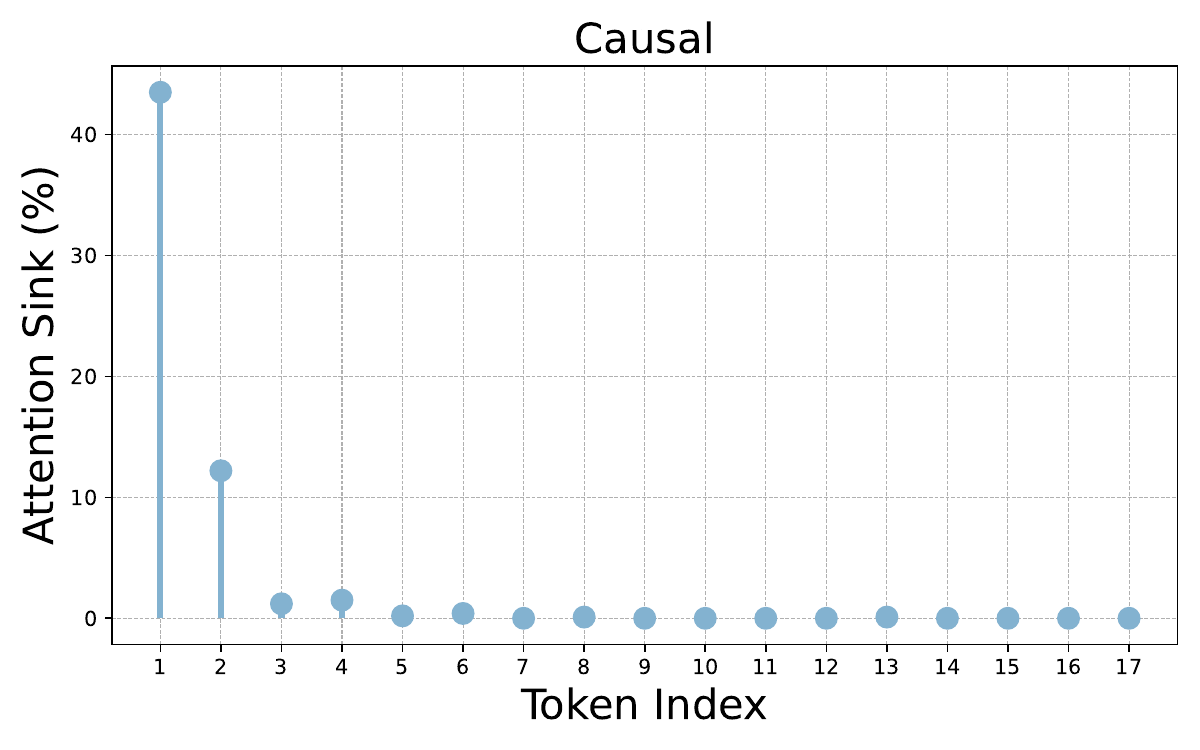}
        \caption{Attention sinks emerge on the first token under the causal attention mask.}
        \label{fig:attnsink_causal}
    \end{subfigure}
    \vspace{1cm} 
    \begin{subfigure}
        \centering
        \includegraphics[width=\linewidth]{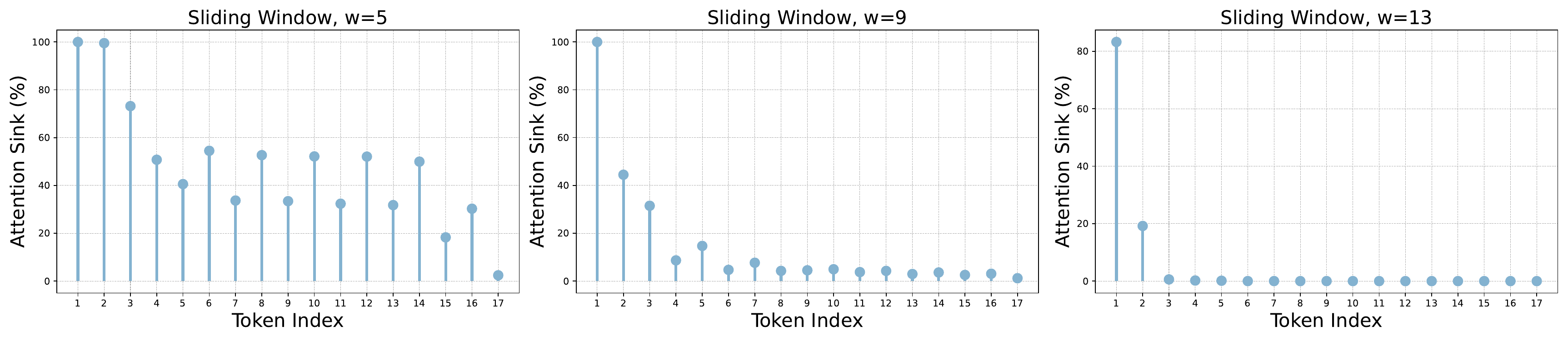}
        \caption{Attention sinks tend to emerge on the absolute first token when the context window size $w$ is larger, under the sliding-window
mask.}
        \label{fig:attnsink_window}
    \end{subfigure}
    \vspace{1cm}
    \begin{subfigure}
        \centering
        \includegraphics[width=\linewidth]{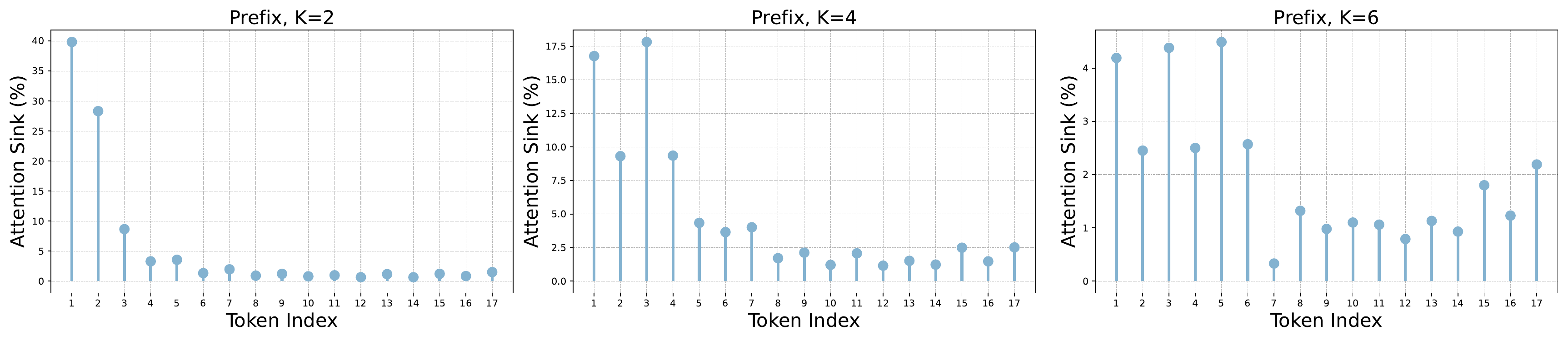}
        \caption{Attention sinks emerge on the $K$ prefix tokens, not just on the first token alone, under the prefix mask.}
        \label{fig:attnsink_prefix}
    \end{subfigure}
    \label{fig:combined_sink}
\end{figure}

\newpage
\section{Additional Results under a Fixed-Vocabulary Setting}~\label{app:other_setup}

In addition to the Gaussian mixture setting discussed in the main text, we also present results for a fixed-vocabulary setting: we define a fixed set of $K$ class vectors $\{v_1, \dots, v_K\}$, and assign each input $x_i = v_{c(i)}$, where $c(i) \in \{1, \dots, K\}$ is the class index. Each class vector $v_k$ is then defined by $v_k = (\mu_k + \lambda \omega)/\sqrt{1+\lambda^2}$, where $\mu_k$ and $\omega$ are $d$-dimensional
vectors whose components are sampled $i.i.d.$ from a normal distribution with mean zero and variance $1/d$. Note that the shared component $\eta$ makes $v_k$ not completely orthogonal to each other, resembling token embeddings in practice~\cite{Cai2021IsotropyIT, Godey2024AnisotropyII, Ethayarajh2019HowCA, Gao2019RepresentationDP, AitSaada2023IsAT}. $\lambda$ is a scalar that controls the strength of anisotropy, where we set $\lambda=0.75$. 

The corresponding results are shown in~\cref{fig:attn-depth-fixed-vocab}-\cref{fig:combined_sink-fixed-vocab}, respectively. Most of the findings presented in the main text continue to hold in this setting. Notably, even with residual connections, increasing depth consistently amplifies positional bias, as shown in~\cref{fig:depth-PE-fixed-vocab}.

One key difference between the two settings is that, compared to the orthogonal embeddings used in the Gaussian mixture setting of~\citet{Reddy2023TheMB}, the anisotropic embeddings in our setup appear to make the task significantly more challenging. We note that in preliminary experiments under the fixed-vocabulary setting with $\lambda = 0$, the results closely resemble those from the Gaussian mixture setup, suggesting that embedding geometry plays a critical role in task difficulty. It is possible that residual connections behave differently when the task becomes more difficult. Future work could further investigate how the geometry of the input embeddings influences both task difficulty and the emergence of positional bias.

\begin{figure*}[t]
    \centering
    \includegraphics[width=\linewidth]{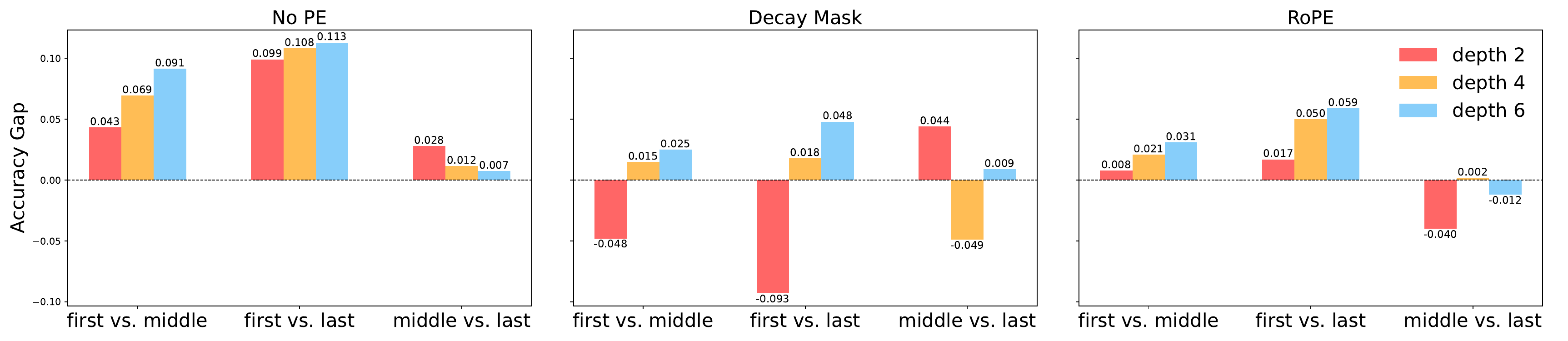}\caption{Results with pure attention layers under the fixed-vocabulary setting: position bias arising solely from the architectural design of the attention mechanism, with \textbf{no positional bias in the training data}. }
    \label{fig:attn-depth-fixed-vocab}
\end{figure*}

 \begin{figure*}[t]
    \centering
    \includegraphics[width=\linewidth]{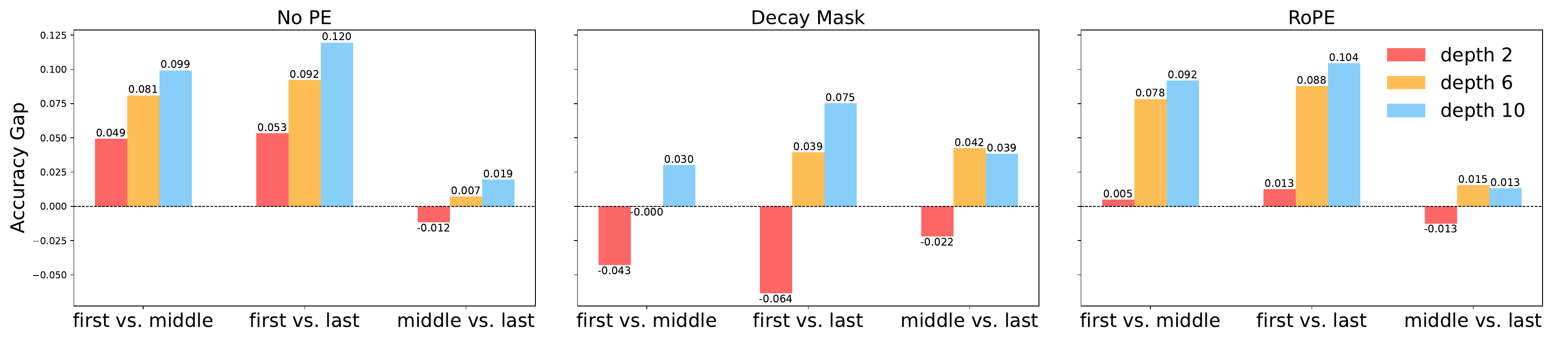}\caption{Results with residual connections under the fixed-vocabulary setting: position bias arising solely from the architectural design of the attention mechanism, with \textbf{no positional bias in the training data}. }
    \label{fig:depth-PE-fixed-vocab}
\end{figure*}

\begin{figure}[t]
 \begin{subfigure}
    \centering
    \includegraphics[width=\linewidth]{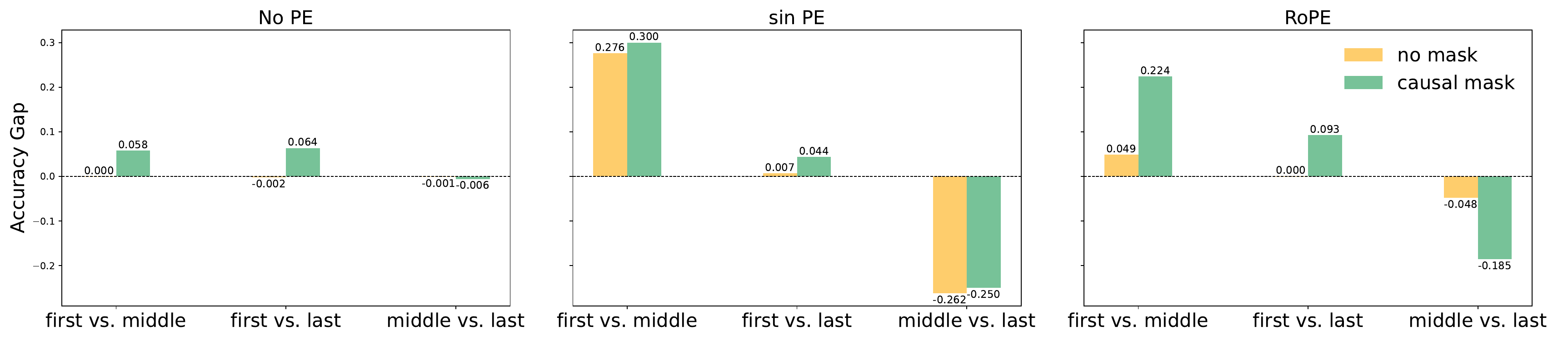}\caption{Position bias exhibited by the model when trained on data biased toward the \textbf{first and last} positions.}
    \label{fig:how_causal_learn-fixed-vocab}
\end{subfigure}
 \vspace{0.5cm}
    \centering
    \begin{subfigure}
        \centering
        \includegraphics[width=\linewidth]{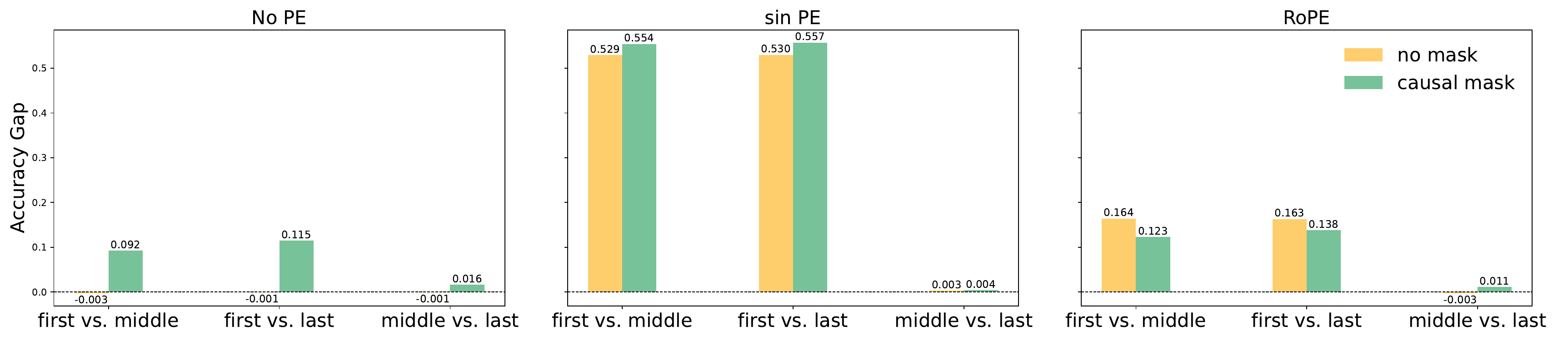}
        \caption{Position bias exhibited by the model when trained on data biased toward the \textbf{first} position.}
        \label{fig:bias_0-fixed-vocab}
    \end{subfigure}
    \vspace{0.5cm} 
    \begin{subfigure}
        \centering
        \includegraphics[width=\linewidth]{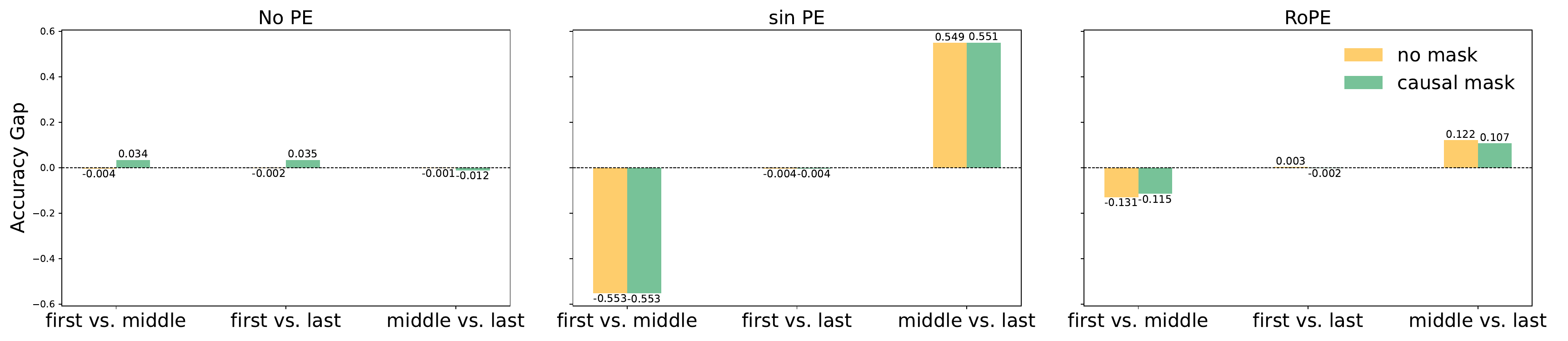}
        \caption{Position bias exhibited by the model when trained on data biased toward the \textbf{middle} position.}
        \label{fig:bias_4-fixed-vocab}
    \end{subfigure}
    \vspace{0.5cm}
    \begin{subfigure}
        \centering
        \includegraphics[width=\linewidth]{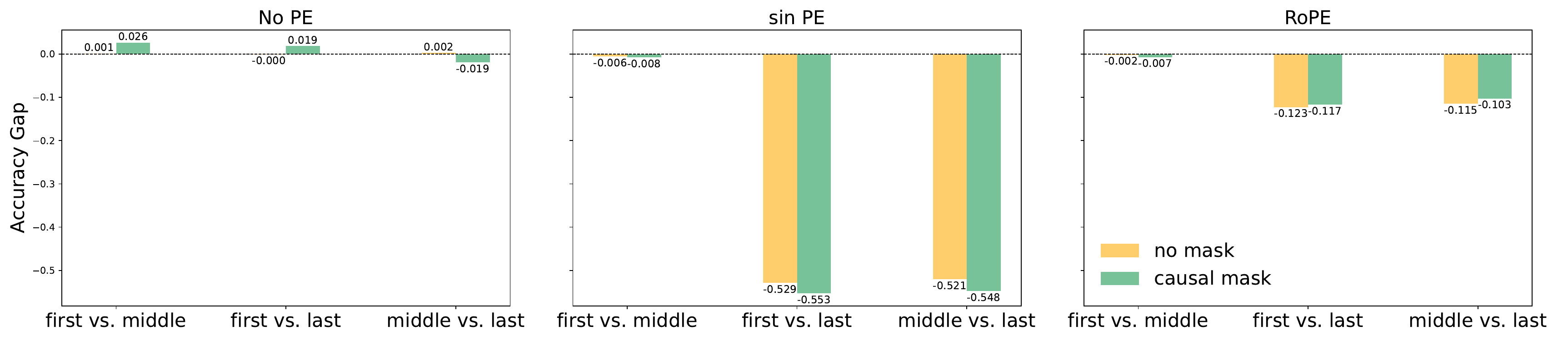}
        \caption{Position bias exhibited by the model when trained on data biased toward the \textbf{last} position.}
        \label{fig:bias_-1-fixed-vocab}
    \end{subfigure}
    \label{fig:combined_bias-fixed-vocab}
     \vspace{0.5cm}
    \caption{Results under fixed-vocabulary setting}
\end{figure}

\begin{figure}[h]
    \centering
    \includegraphics[width=0.75\linewidth]{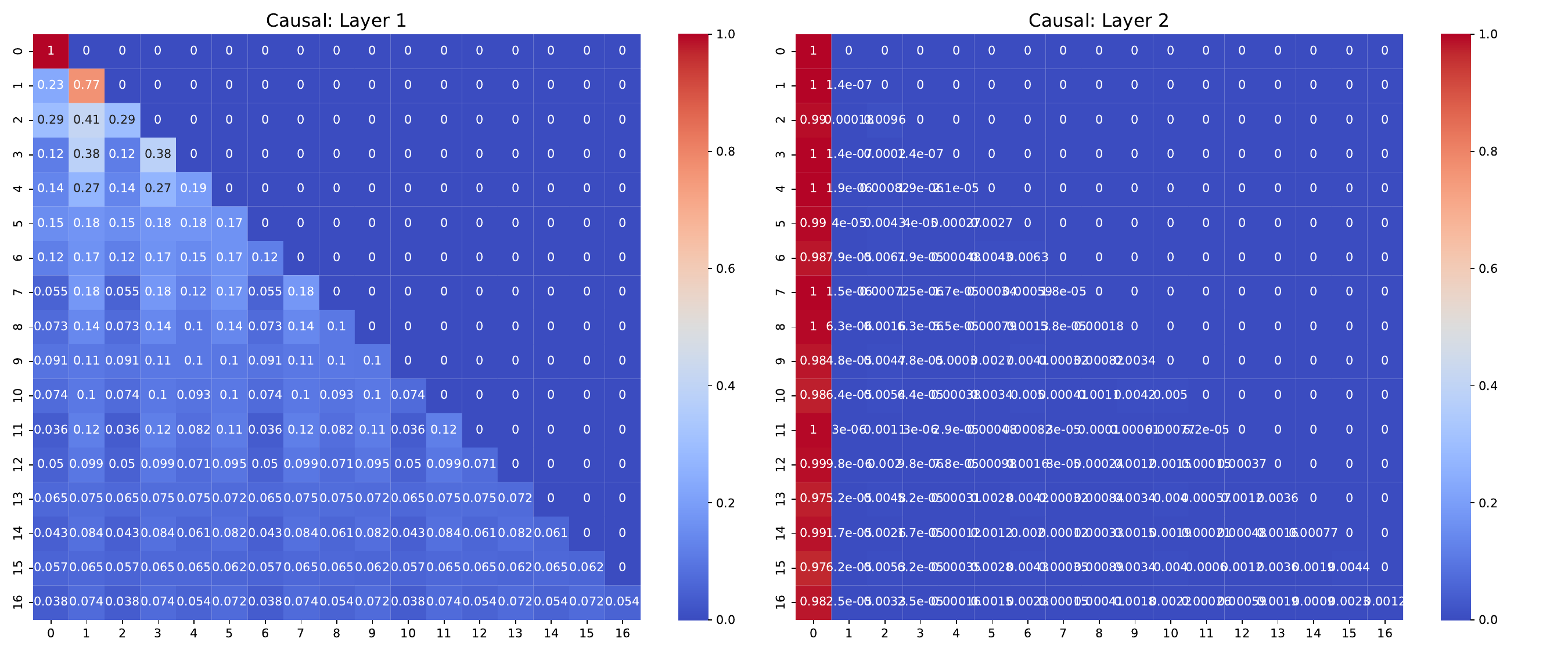}
    \caption{Fixed-vocabulary setting: example of the emergence of attention sinks in our experimental setting. In particular, the sequences used for training and inference are all \textbf{free of position bias}.}
    \label{fig:attn_sinks_causal-fixed-vocab}
\end{figure}

\begin{figure}[h]
    \centering
    \begin{subfigure}
\centering\includegraphics[width=0.34\linewidth]{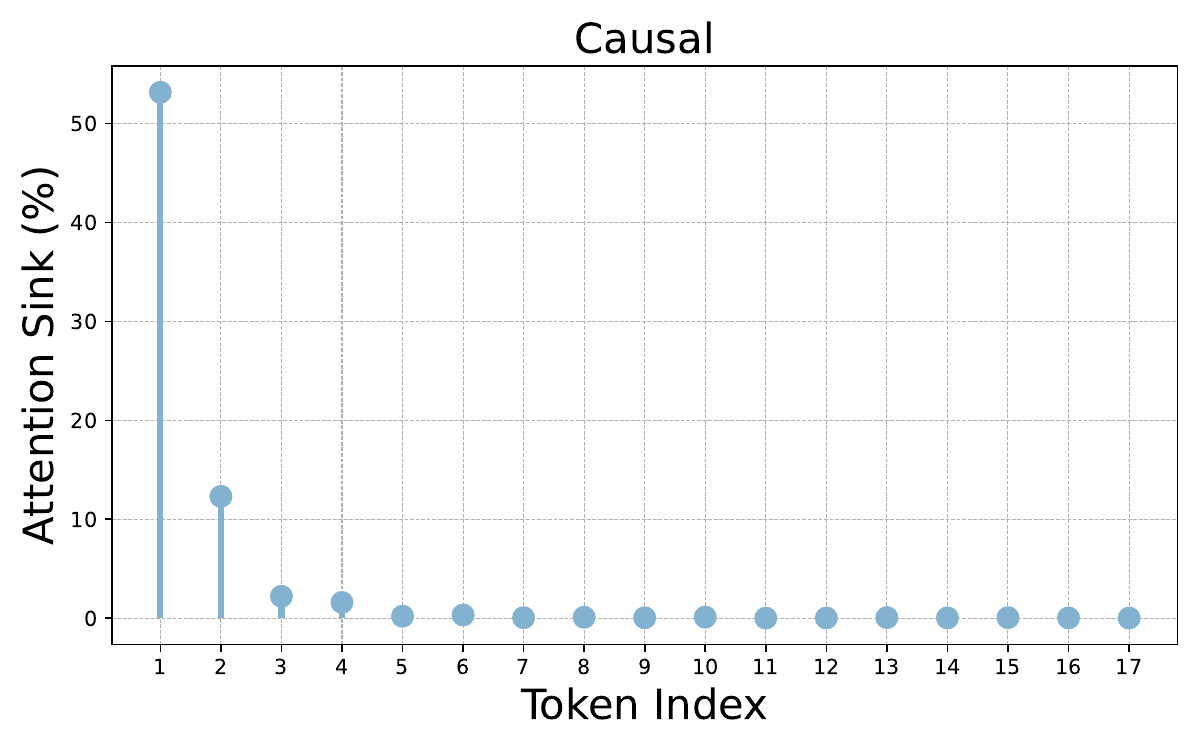}
        \caption{Attention sinks emerge on the first token under the causal attention mask.}
        \label{fig:attnsink_causal-fixed-vocab}
    \end{subfigure}
    \vspace{1cm} 
    \begin{subfigure}
        \centering
        \includegraphics[width=\linewidth]{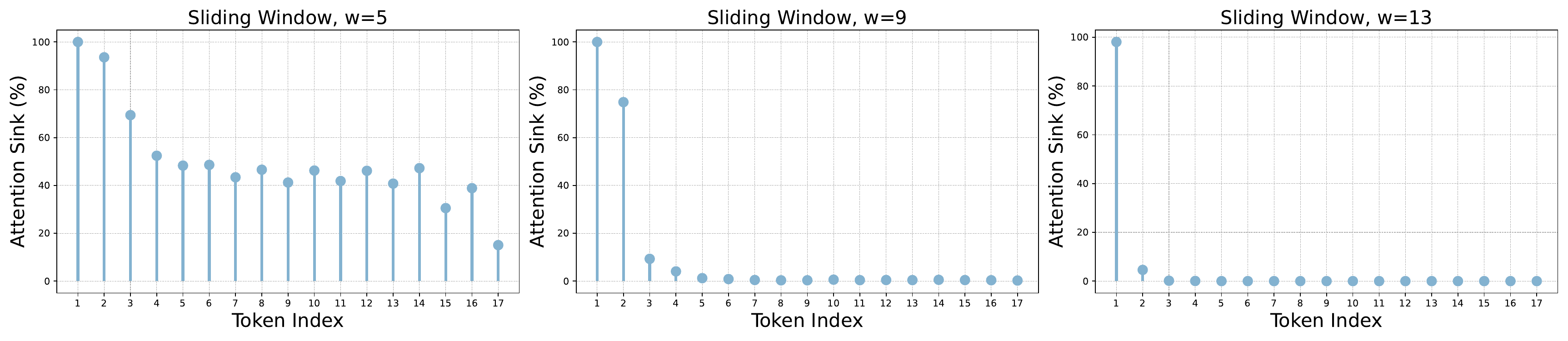}
        \caption{Attention sinks tend to emerge on the absolute first token when the context window size $w$ is larger, under the sliding-window
mask.}
        \label{fig:attnsink_window-fixed-vocab}
    \end{subfigure}
    \vspace{1cm}
    \begin{subfigure}
        \centering
        \includegraphics[width=\linewidth]{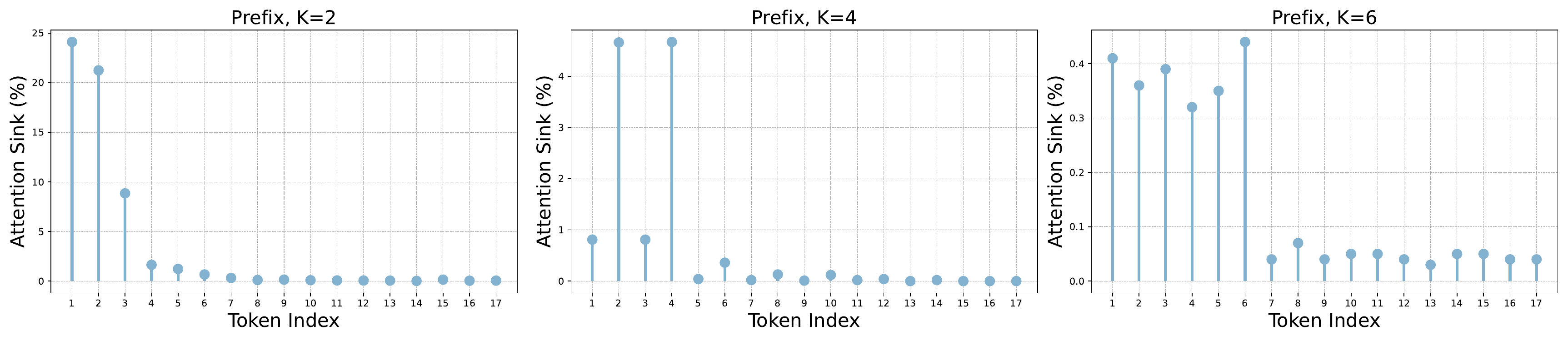}
        \caption{Attention sinks emerge on the $K$ prefix tokens, not just on the first token alone, under the prefix mask.}
        \label{fig:attnsink_prefix-fixed-vocab}
    \end{subfigure}
   
     \vspace{0.5cm}
    \caption{Attention sinks under the fixed-vocabulary setting.}
     \label{fig:combined_sink-fixed-vocab}
\end{figure}

\end{document}